\newcommand{\ourtitle}{Online Submodular Maximization under a Matroid Constraint with Application to Learning Assignments}
\ShortHeadings{\ourtitle}{Golovin, Krause, and Streeter}
\newcommand{\daniel}[1]{\ifthenelse{\boolean{showcomments}}{\textcolor{blue}{||
      Daniel:
      {#1} ||}}{}}
\newcommand{\matt}[1]{\ifthenelse{\boolean{showcomments}}{\textcolor{red}{||
      Matt:
      {#1} ||}}{}}
\newcommand{\andreas}[1]{\ifthenelse{\boolean{showcomments}}{\textcolor{green}{||
      Andreas:
      {#1} ||}}{}}
\newcommand{\commentout}[1]{}
\newcommand{\cV}{{\mathcal{V}}}
\newlength{\presec}
\newlength{\postsec}
\newlength{\presubsec}
\newlength{\postsubsec}
\newlength{\prepara}
\newlength{\postpara}
\newcommand{\initOneLiners}{%
    \setlength{\itemsep}{0pt}
    \setlength{\parsep }{0pt}
    \setlength{\topsep }{0pt}
}
\newcommand{\ignore}[1]{}
\def \argmax {\mathop{\rm arg\,max}}
\newcommand{\paren} [1] {\ensuremath{ \left( {#1} \right) }}
\renewcommand{\Pr}[1]{\ensuremath{\mathbb{P}\left[#1\right] }}
\newcommand{\E}[1]{\ensuremath{\mathbb{E}\left[#1\right] }}
\newcommand{\size}[1]{\ensuremath{\left|#1\right|}}
\newcommand{\tuple}[1]{\ensuremath{\langle #1 \rangle}}
\newcommand{\nats}{\ensuremath{\mathbb{N}}}
\newcommand{\reals}{\ensuremath{\mathbb{R}}}
\newcommand{\bigO}[1]{\ensuremath{O\paren{#1}}}
\newcommand{\class} [1] {\textrm{#1}} %
\renewcommand{\P} {\class{P}}
\newcommand{\NP} {\class{NP}}
\newcommand{\compactdispmath}[1]{\[ \vspace{-1mm} {#1} \vspace{-0mm}  \]}
\newcommand{\roundindex}{\ensuremath{t}}
\newcommand{\partitionindex}{\ensuremath{k}}
\newcommand{\fullversion}{extended version\xspace}
\newcommand{\features}{\ensuremath{v}}
\newcommand{\NumFeatures}{\ensuremath{m}}
\newcommand{\NumRounds}{\ensuremath{T}}
\newcommand{\colorindex}{\ensuremath{c}}
\newcommand{\Color}{\ensuremath{c}}
\newcommand{\Colors}{\ensuremath{\bracket{\NumColors}}}
\newcommand{\Colorvec}{\ensuremath{ \vec{c} }}
\newcommand{\nsamp}{\ensuremath{\rho}}
\newcommand{\est}{\ensuremath{\omega}}
\def \k {\partitionindex}
\def \c {\colorindex}
\def \cvec {\Colorvec}
\def \j {\c}
\def \C {\NumColors}
\def \J {\C}
\def \K {\NumPartitions}
\def \P {\Partition}
\newcommand{\assignment}{S}  %
\newcommand {\groundset}{\ensuremath{\cV}} %
\newcommand {\matroidgroundset}{\groundset}
\newcommand{\OPT}{\textsf{OPT}}
\newcommand{\polytope}[1]{\ensuremath{P\paren{#1}}}
\newcommand{\matroid}{\ensuremath{\mathcal{M}}}
\newcommand{\indset}{\ensuremath{\mathcal{I}}}
\newcommand {\ctrs} {click-through-rates\xspace}
\newcommand{\thresh}{\theta}
\newcommand{\omitproof}[1]{}
\newcommand{\bracket}[1]{\left[#1\right]}
\newcommand{\Esub}[2]{\mathbb{E}_{#1}\bracket{#2}}
\newcommand{\sample}{\textsf{sample}}
\newcommand{\Alg}{\ensuremath{\mathcal{E}}}
\newcommand{\AllRows}{\ensuremath{\mathcal{R}_{\bracket{\J}}}}
\newcommand {\Feasible} {\mathcal{P}}
\newcommand{\NonNegativeReals}{\ensuremath{\mathbb{R}_{\ge 0}}}
\newcommand{\NumColors}{\ensuremath{C}}
\newcommand{\NumPartitions}{\ensuremath{K}}
\newcommand{\OfflineGreedy}{{\sc TabularGreedy}\xspace}
\newcommand{\OnlineGreedy}{{\sc TGonline}\xspace}
\newcommand{\OCG}{{\sc OCG}\xspace}
\newcommand{\ContinuousGreedy}{{\sc ContinuousGreedy}\xspace}
\newcommand{\OnlineContinuousGreedy}{{\sc OnlineContinuousGreedy}\xspace}
\newcommand{\Partition}{\ensuremath{P}}
\newcommand{\Regret}{\ensuremath{r}}
\newcommand{\Row}{\ensuremath{R}}
\newcommand{\Rows}{\ensuremath{\mathcal{R}}}
\newcommand{\conv}{\operatorname{conv}}
\newcommand{\charvec}[1]{\ensuremath{\mathbf{1}_{#1}}}
\newcommand{\vondrak}{Vondr\'{a}k\xspace}
\newtheorem{thm}{Theorem}[section]
\newtheorem{cor}[thm]{Corollary}
\newcommand{\expt}[1]{\mathbb{E}\left[#1\right]}
\newcommand{\expct}[1]{\mathbb{E}\left[#1\right]}
\newcommand{\set}[1]{\left\{#1\right\}}
\newcommand{\Real}{\mathbb R}
\newcommand{\eps}{\varepsilon}
\newcommand{\err}{\epsilon}
\newcommand{\denselist}{
    \itemsep -2pt\topsep-8pt\partopsep-8pt
}
\newcommand{\figref}[1]{Fig.~\ref{#1}}
\newcommand{\eqnref}[1]{Eq.~(\ref{#1})}
\newcommand{\secref}[1]{\S\ref{#1}}
\newcommand{\thmref}[1]{Theorem~\ref{#1}}
\newcommand{\lemref}[1]{Lemma~\ref{#1}}
\newcommand{\probref}[1]{Problem~(\ref{#1})}
\begin{document}

  \title{\ourtitle}
 \author{
     \name \!Daniel Golovin  
     \email {golovin@gmail.com}\\
     \addr \,California Institute of Technology \footnotemark[1]
     \AND
     \name Andreas Krause 
     \email {krausea@ethz.ch}\\                    
     \addr ETH Zurich
    \AND
     \name Matthew Streeter 
     \email {matt@duolingo.com}\\
     \addr Duolingo 
    }

\editor{???}

\maketitle

\renewcommand{\thefootnote}{\fnsymbol{footnote}}

\footnotetext[1]{Work done while at Carnegie Mellon University and at
  the California Institute of Technology.  Current affiliation: Google, Inc. } 
\footnotetext[2]{An earlier version of this
  work appeared as~\cite{streeter09}.}

\begin{abstract}%
Which ads should we display in sponsored search in order to maximize our revenue?
How should we dynamically rank information sources to maximize the
value of the ranking?
These applications exhibit strong diminishing returns: Redundancy
decreases the marginal utility of each ad or information source.
We show that these and other problems can be formalized as
repeatedly selecting 
an assignment of items to positions to maximize a
sequence of \emph{monotone submodular} functions that arrive one by
one.
We present an efficient algorithm
for this general problem and analyze it in the \emph{no-regret} model.
Our algorithm possesses strong theoretical guarantees, such as a
performance ratio that converges to the optimal constant of
$1-1/e$. We empirically evaluate our algorithm on two real-world
online optimization problems on the web: ad allocation with submodular
utilities, and dynamically ranking blogs to detect information
cascades.  Finally, we present a second algorithm that 
handles the more general case in
which the feasible sets are given by a \emph{matroid} constraint, while
still maintaining a $1-1/e$ asymptotic performance ratio.

\end{abstract}

\begin{keywords}
  Submodular Functions, Matroid Constraints, No Regret Algorithms,
  Online Algorithms
\end{keywords}

\vspace{\presec}
\section{Introduction} \label{sec:intro}
\vspace{\postsec}

Consider the problem of repeatedly choosing advertisements to display in sponsored search to maximize our revenue.  In this problem, there is a small set of positions on the page, and each time a query arrives we would like to assign,
to each position,
one out of a large number of possible ads.  In this and related problems that we call \emph{online assignment learning} problems,
there is a set of positions, a set of items, and a sequence of rounds, and on each round we must assign an item to each position.  After each round, we obtain some reward
depending on the selected assignment, and we observe the value of the reward.  When there is only one position, this problem
becomes the well-studied multi-armed bandit problem~\citep{auer2002}.
When the positions have a linear ordering
the assignment can be construed as a ranked list of elements, and the
problem becomes one of selecting lists online.
Online assignment learning thus models a central challenge in web
search, sponsored search, news aggregators, and recommendation
systems, among other applications.

A common assumption made in previous work on these problems is that the quality of an assignment is the sum of a function on the (item, position) pairs in the assignment.  For example, online advertising models with \emph {click-through-rates}~\citep{edelman07a} make an assumption of this form.
More recently, there have been attempts to incorporate
the value of diversity in the reward function~\citep{radlinski08}.
Intuitively, even though the best $\NumPartitions$ results for the
query ``turkey'' might happen to be about the country, the best list
of $\NumPartitions$ results is likely to contain some recipes for the
bird as well.  This will be the case if there are diminishing returns
on the number of relevant links presented to a user;
for example, if it is better to present each user with at least one relevant result
than to present half of the users with no relevant results and half with
two relevant results.
We incorporate these considerations in a flexible way by providing an algorithm that performs well whenever the reward for an assignment is a \emph{monotone submodular
function} of its set of (item, position) pairs.
The (simpler) \emph{offline} problem of maximizing a single assignment
for a fixed submodular function is an important special case of the
problem of maximizing a submodular function subject to a
\emph{matroid} constraint. Matroids are important objects in
combinatorial optimization, generalizing the notion of linear
independence in vector spaces. The problem of maximizing a submodular
function subject to arbitrary matroid constraints was recently
resolved by \citet{calinescu07}, who provided an algorithm with
optimal (under reasonable complexity assumptions) approximation
guarantees. In this paper, besides developing a specialized algorithm,
\OnlineGreedy, optimized for online learning of assignments, we also
develop an algorithm, \OCG, for the general problem of maximizing
submodular functions subject to a matroid constraint.  \andreas{Need
  to explain why \OnlineGreedy is a separate algorithm.} \daniel{Done.} 

Our key contributions are:
\begin{enumerate}\denselist
\item[\emph{i)}] an efficient algorithm, \OfflineGreedy, that provides a $(1-1/e)$ approximation ratio for the problem of optimizing assignments under submodular utility functions,
\item[\emph{ii)}] a specialized algorithm for online learning of assignments, \OnlineGreedy, that has strong performance guarantees in the \emph {no-regret} model, 
\item[\emph{iii)}] an algorithm for online maximization of submodular functions subject to a general matroid constraint, 
\item[\emph{iv)}] an empirical evaluation on two problems of information gathering on the web.
\end{enumerate}

This manuscript is organized as follows. In \secref{sec:problem}, we introduce the assignment learning problem. In \secref{sec:algorithm}, we develop a novel algorithm for the \emph{offline} problem of finding a near-optimal assignment subject to a monotone submodular utility function. In \secref{sec:online}, we develop \OnlineGreedy for online learning of assignments. We then, in \secref{sec:continuous-greedy}, develop an algorithm, \OCG, for online optimization of submodular functions subject to arbitrary matroid constraints. We evaluate our algorithms in \secref{sec:evaluation}, review related work in \secref{sec:related_work}, and conclude in \secref{sec:conclusions}.

\vspace{\presec}
\section{The assignment learning problem} %
\label{sec:problem}
\vspace{\postsec}

We consider problems, where we have $\NumPartitions$ positions (e.g., slots for displaying ads), and need to assign to each position an item (e.g., an ad) in order to maximize a utility function (e.g., the revenue from clicks on the ads).  We address both the \emph{offline} problem, where the utility function is specified in advance, and the \emph{online} problem, where a sequence of utility functions arrives over time, and we need to repeatedly select a new assignment in order to maximize the cumulative utility.

\vspace{\prepara}
\paragraph{The Offline Problem.}
In the offline problem we are given sets $\Partition_1$, $\Partition_2$, \ldots, $\Partition_\NumPartitions$, where $\Partition_\k$ is the set of items that may be placed
in position $\k$.  We assume without loss of generality that these sets are disjoint\footnote{If the same item can be placed in multiple positions, simply create multiple distinct copies of it.}. An \emph{assignment} is a subset $\assignment\subseteq\groundset$, where $\groundset=\Partition_1\cup\Partition_2 \cup \dots\cup\Partition_\NumPartitions$ is the set of all items.
We call an assignment \emph{feasible}, if at most one item is assigned to each position (i.e., for all  $\k$, $|\assignment \cap \Partition_\k| \le 1$).  We use $\Feasible$ to refer to the set of feasible assignments.

Our goal is to find a feasible assignment maximizing a utility function $f:2^{\groundset}\to\NonNegativeReals$.
As we discuss later, many important assignment problems satisfy \emph{submodularity}, a natural diminishing returns property:  Assigning a new item to a position $\k$ increases the utility more if few elements have been assigned, and less if many items have already been assigned.  Formally, a utility function $f$ is called submodular, if for all $\assignment\subseteq\assignment'$ and $s\notin\assignment'$ it holds that
$f(\assignment\cup\set{s})-f(\assignment)\geq f(\assignment'\cup\set{s})-f(\assignment')$. We will also assume $f$ is monotone, i.e., for all $\assignment\subseteq\assignment'$, we have $f(\assignment)\leq f(\assignment')$.  Our goal is thus, for a given non-negative, monotone and submodular utility function $f$, to find a feasible assignment $\assignment^*$ of maximum utility, 
$$\assignment^*=\argmax_{\assignment\in\Feasible} f(\assignment).$$
This optimization problem is NP-hard. In fact, \citet{mirrokni08} show that any algorithm that is guaranteed to obtain a solution within a factor of $(1-1/e+\epsilon)$ of the optimal value requires exponentially many evaluations of $f$ in the worst case.
In light of this negative result, we can only hope to efficiently obtain a solution that achieves a fraction of $(1-1/e)$ of the optimal value. In \secref{sec:offline} we develop such an algorithm.

\vspace{\prepara}
\paragraph{The Online Problem.}
The offline problem is inappropriate to model dynamic settings, where
the utility function may change over time, and we need to repeatedly
select new assignments, trading off exploration (experimenting with the
ads displayed to gain information about the utility function), and
exploitation (displaying ads which we believe will maximize utility).
More formally, we face a sequential decision problem, where,
on each round (which, e.g., corresponds to a user query for a
particular term), we want to
select an assignment $\assignment_{\roundindex}$ (ads to display).  We
assume that the sets $\Partition_1$, $\Partition_2$, \ldots,
$\Partition_\NumPartitions$ are fixed in advance for all
rounds.
After we select
the assignment we obtain reward
$f_{\roundindex}(\assignment_{\roundindex})$ for some non-negative monotone
submodular utility function $f_{\roundindex}$.  We call the setting
where we do not get any information about $f_{\roundindex}$ beyond the
reward the \emph{bandit feedback} model.
In contrast, in the \emph{full-information feedback} model we obtain
oracle access to $f_{\roundindex}$ (i.e., we can evaluate
$f_{\roundindex}$ on arbitrary feasible assignments).
Both models arise in real applications, as we show in \secref{sec:evaluation}.

\ignore{
\begin{enumerate}
\item \emph{Constrained Full Information.}
  We obtain
  oracle access to $f_{\roundindex}$, meaning we may obtain
  $f_{\roundindex}(\assignment)$ for any set $\assignment \in \Feasible$ (but not for $\assignment
  \notin \Feasible$).
\item \emph{Partial Information.} %
  We see the values
  $\set{f_{\roundindex}(\assignment_{\roundindex}^{\k}) :
    \k \in \bracket{\NumPartitions} }$, where
$\assignment_{\roundindex}^{\k} :=
\bigcup_{\k' \in \bracket{\k}} \paren{
  \assignment_{\roundindex} \cap P_{\k'} }$.
\item \emph{Bandit Feedback.} We get no additional information beyond $f_{\roundindex}(\assignment_{\roundindex})$.
\end{enumerate}
} %

The goal is to maximize the total reward we obtain, namely
$\sum_{\roundindex} f_{\roundindex}(\assignment_{\roundindex})$.
Following the multi-armed bandit
literature, we evaluate our performance after $\NumRounds$ rounds by comparing our
total reward against that obtained by a clairvoyant algorithm with
knowledge of the sequence of functions $\tuple{f_1, \ldots, f_{\NumRounds}}$,
but with the restriction that it must select the same assignment
on each round.
The difference between the clairvoyant algorithm's total reward and
ours is called our \emph{regret}.
The goal is then to develop an algorithm whose
expected regret grows sublinearly in the number of rounds; such an algorithm is
said to have (or be) \emph{no-regret}.
However, since sums of submodular functions remain submodular, the clairvoyant algorithm has to solve an offline assignment problem with $f(\assignment)=\sum_{\roundindex} f_{\roundindex}(\assignment)$. Considering the hardness of this problem, no polynomial-time algorithm can possibly hope to
achieve a no-regret guarantee. %
To accommodate this fact, we discount the reward of the clairvoyant algorithm by a factor of
$(1-1/e)$: We define the
\emph{$(1-1/e)$-regret} of a random sequence
  $\tuple{\assignment_1, \ldots, \assignment_{\NumRounds}}$ %
  as
\[
\paren{1-\frac{1}{e}} \cdot \max_{\assignment \in \Feasible} \set { \sum_{\roundindex =
  1}^{\NumRounds} f_{\roundindex}(\assignment) }
\ - \
\expt{\sum_{\roundindex = 1}^{\NumRounds} f_{\roundindex}(\assignment_{\roundindex}) } \mbox { .}
\]
\ignore{
Incorporating features, we similarly define the \emph{feature-weighted}
\emph{$(1-1/e)$-regret} as
\[
\max_{\text{feature }q \in \bracket{\NumFeatures}} \left\{ \paren{1-\frac{1}{e}} \cdot \max_{\assignment \in \Feasible} \sum_{\roundindex =
  1}^{\NumRounds} \features^{\roundindex}_{q} \cdot f_{\roundindex}(\assignment)
\quad - \quad
\expt{\sum_{\roundindex = 1}^{\NumRounds} \features^{\roundindex}_{q} \cdot f_{\roundindex}(\assignment_{\roundindex})}    \right\}
\]
} %
Our goal is then to develop efficient algorithms whose
$(1-1/e)$-regret grows sublinearly in $\NumRounds$.

\vspace{\prepara}
\paragraph{Subsumed Models.}
Our model generalizes several common models for sponsored search ad
selection, and web search results.
These include models with \emph{click-through-rates}, in which
it is assumed that each (ad, position) pair has some probability
$p(a, \k)$ of being clicked on, and there is some monetary
reward $b(a)$ that is obtained whenever ad $a$ is clicked on.
Often, the \ctrs are assumed to be \emph{separable},
meaning $p(a, \k)$ has the functional form $\alpha(a) \cdot
\beta(\k)$ for some functions $\alpha$ and $\beta$.
See~\cite{feldman08} and~\cite{lahaie07} for more details on
sponsored search ad allocation.  Note that in both of these cases, the
(expected) reward of a set $S$ of (ad, position) pairs
is $\sum_{(a,\k) \in S}{g(a,\k)}$ for some nonnegative function $g$.
It is easy to verify that such a reward function is monotone submodular.
Thus, we can capture this model in our framework by setting $\Partition_\k = \mathcal{A} \times \set {\k}$, where $\mathcal{A}$ is the set of ads.
Another subsumed model, for web search, appears in~\cite{radlinski08};
it assumes that each user is interested in a particular set of
results, and any list of results that intersects this set generates a unit
of value; all other lists generate no value, and the ordering of results is irrelevant.  Again, the reward function is monotone submodular.
In this setting, it is desirable to display a diverse set of results in order to maximize the likelihood that at least one of them will interest the user.

Our model is flexible in that we can handle position-dependent effects and diversity considerations simultaneously.  For example, we can
handle the case that each user $u$ is interested in a particular set $A_u$ of ads
and looks at a set $I_u$ of positions, and the reward of an assignment $\assignment$ is
any monotone-increasing concave function $g$ of $|\assignment \cap (A_u \times I_u)|$.  If $I_u = \set {1, 2, \ldots, k}$ and $g(x) = x$, this models the case where the quality is the number of relevant result
that appear in the first $k$ positions.
If $I_u$ equals all positions and $g(x) = \min \set{x, 1}$ we recover the model of~\citet{radlinski08}.

\section{An approximation algorithm for the offline problem}

\label {sec:algorithm}
\subsection{The locally greedy algorithm} \label {sec:locally_greedy}
\vspace{\postsubsec}
A simple approach to the assignment problem is the following greedy
procedure: the algorithm steps through all $\K$ positions (according
to some fixed, arbitrary ordering). For position $\k$, it simply
chooses the item that increases the total value as much as possible,
i.e., it chooses
\compactdispmath{s_\k= \argmax_{s \in \Partition_\k} \set { f(\{s_1,\dots,s_{\k-1} \} + s) }\mbox{,}}
where, for a set $S$ and element $e$, we write $S + e$ for $S \cup \set{e}$.
Perhaps surprisingly, no matter which ordering over the positions is chosen, this so-called \emph {locally greedy algorithm} produces an assignment that obtains at least half the optimal value~\citep{fisher78}.  In fact, the following more general result holds.  We will use this lemma in the analysis of our improved offline algorithm, which uses the locally greedy algorithm as a subroutine.

\begin {lemma} \label {lem:locally_greedy}
Suppose $f : 2^\groundset \to \NonNegativeReals$ is of the form
$f(\assignment) = f_0(\assignment) + \sum_{\k=1}^\K f_\k(\assignment
\cap P_\k)$
where $f_0 : 2^\groundset \to \NonNegativeReals$ is monotone submodular, and
$f_\k : 2^{P_\k} \to \NonNegativeReals$ is arbitrary for $\k \ge 1$.
Let $L$ be the solution returned by the locally greedy algorithm.  Then
$$f(L) + f_0(L) \ge \max_{\assignment \in \Feasible}  f(\assignment) .$$
\end {lemma}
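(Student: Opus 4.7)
The plan is to prove the lemma by a standard hybrid/telescoping argument, tracking how much of $f$ can be ``recovered'' as we swap the optimal choice into the greedy solution one position at a time.

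First, I would set up notation. Let $\assignment^* \in \Feasible$ achieve the maximum on the right-hand side; without loss of generality add a null element of zero value to each $P_\k$ so that both $\assignment^*$ and the greedy solution $L$ contain exactly one element from each $P_\k$. Write $s_\k$ for the greedy pick at position $\k$, $s_\k^*$ for the optimum pick at position $\k$, and $L_\k = \{s_1,\ldots,s_\k\}$ (so $L_0=\emptyset$, $L_\K = L$). Define the hybrid solutions
\[
H_\k \;=\; L_\k \cup \bigl(\assignment^* \setminus (P_1 \cup \cdots \cup P_\k)\bigr),
\]
so that $H_0 = \assignment^*$ and $H_\K = L$, and telescope:
\[
f(\assignment^*) - f(L) \;=\; \sum_{\k=1}^{\K} \bigl( f(H_{\k-1}) - f(H_\k) \bigr).
\]

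Next I would bound each telescoping term by $f_0(L_\k) - f_0(L_{\k-1})$. Let $T_\k = \assignment^* \setminus (P_1 \cup \cdots \cup P_\k)$. Since $H_{\k-1}$ and $H_\k$ differ only on $P_\k$ (namely $s_\k^*$ versus $s_\k$) and agree on all other $P_j$, the decomposition of $f$ gives
\[
f(H_{\k-1}) - f(H_\k) \;=\; \bigl[f_0(L_{\k-1} \cup \{s_\k^*\} \cup T_\k) - f_0(L_{\k-1} \cup \{s_\k\} \cup T_\k)\bigr] + \bigl[f_\k(\{s_\k^*\}) - f_\k(\{s_\k\})\bigr].
\]
Submodularity of $f_0$ bounds the first bracket above by $f_0(L_{\k-1} \cup \{s_\k^*\}) - f_0(L_{\k-1} \cup T_\k)$, and then monotonicity lets me replace the subtracted term by $f_0(L_{\k-1})$, giving the upper bound $f_0(L_{\k-1} \cup \{s_\k^*\}) - f_0(L_{\k-1})$. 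Finally, the greedy choice of $s_\k$ maximizes $f_0(L_{\k-1} + s) + f_\k(\{s\})$ over $s \in P_\k$ (the other $f_j$ terms are unaffected), so
\[
f_0(L_{\k-1} \cup \{s_\k^*\}) + f_\k(\{s_\k^*\}) - f_\k(\{s_\k\}) \;\le\; f_0(L_{\k-1} \cup \{s_\k\}) \;=\; f_0(L_\k).
\]
Combining yields $f(H_{\k-1}) - f(H_\k) \le f_0(L_\k) - f_0(L_{\k-1})$.

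Summing over $\k$ telescopes the right side to $f_0(L) - f_0(\emptyset) \le f_0(L)$, so $f(\assignment^*) - f(L) \le f_0(L)$, which is the desired inequality. The only delicate step, and the one I would be most careful about, is the combination of submodularity with monotonicity of $f_0$ in the first bracket: submodularity pushes the marginal of $s_\k^*$ from the large context $L_{\k-1}\cup T_\k$ down to $L_{\k-1}$, and monotonicity handles the ``$T_\k$ disappears'' step — both are needed because the $f_\k$'s are arbitrary (possibly negative) and cannot absorb slack. Once that step is done correctly, the rest is bookkeeping.
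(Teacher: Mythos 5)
Your proof is correct and is essentially the paper's argument: both are one-position-at-a-time exchange/telescoping arguments that use submodularity of $f_0$ to push the marginal of the optimal item for position $\k$ down to the greedy prefix $L_{\k-1}$ and then invoke the greedy choice (the paper telescopes $f_0\paren{L \cup \set{s_1^*,\dots,s_\k^*}}$ on top of all of $L$ rather than through your hybrids $H_\k$, but that is only a difference in bookkeeping). One wrinkle: your displayed intermediate bound $f_0(L_{\k-1}\cup\set{s_\k^*}\cup T_\k)-f_0(L_{\k-1}\cup\set{s_\k}\cup T_\k)\le f_0(L_{\k-1}\cup\set{s_\k^*})-f_0(L_{\k-1}\cup T_\k)$ is false in general (already for modular $f_0$ when $T_\k$ carries large weight and $s_\k$ small weight); applying monotonicity first to drop $s_\k$ and then submodularity to shrink the context from $L_{\k-1}\cup T_\k$ to $L_{\k-1}$ lands directly on the correct bound $f_0(L_{\k-1}\cup\set{s_\k^*})-f_0(L_{\k-1})$, which is all the rest of your argument uses, so the conclusion stands.
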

\ifthenelse{\boolean{istechrpt}}
{The proof is given in Appendix A.}
{The proof is given in an \fullversion of this paper~\citep{arxiv-version}.}
Observe that in the special case where $f_\k \equiv 0$ for all $\k \ge 1$, Lemma \ref
{lem:locally_greedy} says that  $f(L) \ge \frac 1 2 \max_{\assignment
  \in \Feasible} f(\assignment)$.
\ifthenelse{\not\boolean{istechrpt}}
{In~\citet{arxiv-version} we provide a simple example showing that this $1/2$ approximation ratio is tight.}
{ %

The following example shows that the $1/2$ approximation ratio is
tight. Consider an instance of the ad allocation problem with two ads,
two positions and two users, Alice and Bob. Alice is interested in ad
1, but pays relatively little attention to ads: She will only click on the ad if it appears in the first position. Bob is interested in ad 2, and will look through all positions. Now suppose that Alice searches slightly less frequently (with probability $\frac{1}{2}-\varepsilon$) than Bob (who searches with probability $\frac{1}{2}+\varepsilon$). The greedy algorithm first chooses the ad to assign to slot 1. Since the ad is more likely to be shown to Bob, the algorithm chooses ad 2, with an expected utility of $\frac{1}{2}+\varepsilon$. Since Alice will only look at position 1, no ad assigned to slot 2 can increase the expected utility further.  On the other hand, the optimal solution is to assign ad 1 to slot 1, and ad 2 to slot 2, with an expected utility of 1.}

\vspace{\presubsec}
\subsection {An algorithm with optimal approximation ratio} \label {sec:offline}
\vspace{\postsubsec}

We now present an algorithm that achieves the optimal approximation
ratio of $1 - 1/e$, improving on the $\frac 1 2$ approximation
for the locally greedy algorithm.  Our algorithm associates with each
partition $\Partition_\k$ a \emph {color} $\Color_\k$ from a palette
$\Colors$ of $\NumColors$ colors, where 
we use the notation $\bracket{n}=\set{1,2,\dots,n}$.   
For any set $S \subseteq \groundset \times \Colors$ and vector
$\Colorvec = (\Color_1, \ldots, \Color_\K)$, define
\ifthenelse{\boolean{istechrpt}}{
\[
    \sample_{\Colorvec}(S) = \bigcup_{\k=1}^\K \set { x \in \Partition_\k: (x, \Color_\k) \in S }  \mbox { .}\vspace{-1mm}
\]}{$\sample_{\Colorvec}(S) = \bigcup_{\k=1}^\K \set { x \in \Partition_\k: (x, \Color_\k) \in S }$.}

Given a set $S$ of (item, color) pairs, which we may think of as
labeling each item with one or more colors, $\sample_{\Colorvec}(S)$
returns a set containing each item $x$ that is labeled with whatever color
$\Colorvec$ assigns to the partition that contains $x$.
Let $F(S)$ denote the expected value of $f(\sample_{\Colorvec}(S))$ when each color $\Color_\k$ is selected uniformly at random from $\Colors$.  Our \OfflineGreedy algorithm greedily optimizes $F$, as shown in the following pseudocode.

\SetKw {KwForEach} {for each}
\SetKw {KwFrom} {from}
\SetKw {KwSet} {set}
\SetKw {KwReturn} {return}
\begin {algorithm}
\Titleofalgo { \OfflineGreedy }
\KwIn {integer $\NumColors$, sets $\Partition_1$, $\Partition_2$, \ldots, $\Partition_\K$, function $f: 2^\groundset \to \NonNegativeReals$ (where $\groundset = \bigcup_{\k=1}^\K \Partition_\k$)}
\ \\
\KwSet $G := \emptyset$. \\
\For(\tcc*[f]{For each color \hspace{1.2em} }){$\c$ \KwFrom $1$ \KwTo $\NumColors$}{
  \For(\tcc*[f]{For each partition}){$\k$ \KwFrom $1$ \KwTo $\K$} {
    \KwSet $g_{\k,\c} = \argmax_{x \in \P_\k  \times \set{\c}} \set { F(G
    + x) }$ \tcc*[f]{Greedily pick $g_{\k,\c}$\hspace{0.4em} } \\
    \KwSet $G := G + g_{\k,\c}$\;
  }
}
\KwForEach $\k \in \bracket{\K}$, choose $\Color_\k$ uniformly at random
    from $\Colors$. \\
\KwReturn $\sample_{\Colorvec}(G)$, where  $\Colorvec := (\Color_1, \ldots, \Color_\K)$.
\end {algorithm}
Observe that when $\NumColors = 1$, there is only one possible choice for $\Colorvec$, 
and \OfflineGreedy is simply the locally greedy
algorithm from \secref{sec:locally_greedy}.
\andreas{Perhaps move this paragraph to the Continuous Greedy section to clarify the relationship between the algorithms?}
In the limit as $\NumColors \to \infty$, \OfflineGreedy can intuitively be viewed
as an algorithm for a continuous extension of the
problem followed by a rounding procedure, in the
same spirit as Vondr\'{a}k's \ContinuousGreedy
algorithm~\citep{calinescu07}.
In our case, the continuous extension is to compute a probability
distribution $D_\k$ for each position $\k$ with support in $P_\k$ (plus
a special ``select nothing'' outcome), such that if we independently
sample an element $x_\k$ from $D_\k$, 
$\E {f(\set{x_1, \ldots, x_\K})}$ is maximized.
It turns out that if the positions individually, greedily, and in
round-robin fashion, add infinitesimal units of probability mass to their
distributions so as to maximize this objective function, they achieve the
same objective function value as if, rather than making decisions in a
round-robin fashion, they had cooperated and added the combination of $\K$
infinitesimal probability mass units (one per position) that greedily maximizes the
objective function.  The latter process, in turn, can be shown to be
equivalent to a greedy algorithm for maximizing a (different) submodular
function subject to a cardinality constraint, which implies that it
achieves a $1-1/e$ approximation ratio~\citep{nemhauser78}.
\ignore{  %
It turns out that if the positions individually, simultaneously, and greedily add probability mass 
at a fixed rate to their distributions to maximize this objective function, they do
nearly as well as a process that adds probability mass
simultaneously at the same rate to all positions in a manner which is greedy with
respect the collective increase in the objective function.
Moreover, the latter process can be viewed as the 
greedy algorithm to maximize a (different) submodular function subject to a 
cardinality constraint~\footnote{More precisely, as the limit process
of the greedy algorithm as $\delta \to 0$, where in each step the
algorithm selects an action that adds $\delta$ probability mass to each $D_\k$, and can perform at most
$1/\delta$ such actions.}, and is thus a $1-1/e$
approximation~\citep{nemhauser78}.
} %
\OfflineGreedy represents a tradeoff between these two extremes;
its performance is summarized by Theorem~\ref{thm:offline_greedy}. 
For now, we assume that the $\argmax$ in the inner loop is computed exactly.
\ifthenelse{\boolean{istechrpt}}
{In Appendix A}
{In the \fullversion~\citep{arxiv-version},}
we bound the performance loss that results from approximating the $\argmax$
(e.g., by estimating $F$ by repeated sampling).

\begin {theorem} \label {thm:offline_greedy}
Suppose $f$ is monotone submodular.  Then
$$F(G) \ge \beta(\K, \NumColors) \cdot \max_{\assignment \in \Feasible} f(\assignment),$$
where $\beta(\K, \NumColors)$ is defined as $1 - (1 - \frac {1} {\NumColors})^{\NumColors} - {\K \choose 2} \NumColors^{-1} $.
\end {theorem}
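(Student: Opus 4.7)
Setup. Let $G^{(c,k)}$ denote the state of $G$ after the inner-loop iteration that adds $g_{k,c}$ (with $G^{(c,0)} := G^{(c-1,K)}$ and $G^{(0,K)} := \emptyset$), so $G^{(c)} := G^{(c,K)}$ is the state at the end of color round $c$. Let $\assignment^* \in \argmax_{\assignment\in\Feasible} f(\assignment)$, $\OPT := f(\assignment^*)$, let $o_k$ be the element of $\assignment^* \cap P_k$ (with ``$(o_k,c)$'' treated as a no-op when $\assignment^* \cap P_k = \emptyset$), and set $OPT_c := \{(o_k,c) : k\in[K]\}$. The plan is to treat \OfflineGreedy as a Nemhauser--Wolsey-style greedy on the monotone submodular function $F$ over the product ground set $\groundset \times \Colors$: each outer-loop iteration is effectively a fractional step of size $1/C$, incurring a second-order error from the sequential handling of the $K$ partitions within a color. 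I would first verify that $F$ itself is monotone submodular by noting that for $x = (e,c)$ with $e \in P_k$, adding $x$ changes $\sample_\Colorvec(S)$ only when $c_k = c$, so
\[
F(S+x) - F(S) \ =\ \tfrac{1}{C}\,\Esub{\Colorvec \mid c_k = c}{f(\sample_\Colorvec(S) \cup \{e\}) - f(\sample_\Colorvec(S))},
\]
which inherits monotonicity and diminishing returns from $f$.

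Per-round inequality. The key lemma I would establish is
\[
F(G^{(c)}) - F(G^{(c-1)}) \ \ge\ \tfrac{1}{C}\bigl(\OPT - F(G^{(c-1)})\bigr) - \epsilon_c,
\]
in three pieces. (i) Applying the greedy rule at each step $(c,k)$ and telescoping over $k$ yields $F(G^{(c)}) - F(G^{(c-1)}) \ge \sum_k [F(G^{(c-1)} + (o_k, c)) - F(G^{(c-1)})] - \epsilon_c$, where $\epsilon_c := \sum_{j<k}\Delta_{k,j}$ and $\Delta_{k,j} := F(A+x) + F(A+y) - F(A) - F(A+x+y)$ at $A = G^{(c,j-1)}$, $x = (o_k,c)$, $y = g_{j,c}$ is the non-negative discrete second-difference of $F$. (ii) Submodularity of $F$ gives the union bound $\sum_k[\cdots] \ge F(G^{(c-1)} \cup OPT_c) - F(G^{(c-1)})$. (iii) Writing $R := \sample_\Colorvec(G^{(c-1)})$ and $\Sigma := \{o_k : c_k = c\}$, one has $\sample_\Colorvec(G^{(c-1)} \cup OPT_c) = R \cup \Sigma$, with $\Sigma$ disjoint from $R$ within each partition. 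A partition-at-a-time contention-free-vs.-independent submodular swap gives $\E[f(R \cup \Sigma)] \ge \E[f(R \cup \tilde{\Sigma})]$, where $\tilde{\Sigma}$ is independent of $R$ with the same $1/C$ marginals; concavity of the multilinear extension of $f$ along the non-negative direction $\charvec{\assignment^* \setminus R}$ then yields $\E[f(R \cup \tilde{\Sigma}) \mid R] \ge (1-1/C) f(R) + (1/C) f(R \cup \assignment^*) \ge (1-1/C) f(R) + \OPT/C$, and averaging over $R$ produces $F(G^{(c-1)} \cup OPT_c) - F(G^{(c-1)}) \ge \tfrac{1}{C}\bigl(\OPT - F(G^{(c-1)})\bigr)$.

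Error bound and recurrence. For $j<k$, the integrand inside $\Delta_{k,j}$ vanishes unless both $c_k = c$ and $c_j = c$ (otherwise $x$ or $y$ does not enter the sample), an event of probability $1/C^2$; conditional on it, the integrand is a second-difference of $f$ bounded by $f(\{o_k\}) \le \OPT$ via monotonicity and submodularity (any singleton is a feasible assignment). Hence $\Delta_{k,j} \le \OPT/C^2$, $\epsilon_c \le \binom{K}{2}\OPT/C^2$, and $\sum_c \epsilon_c \le \binom{K}{2}\OPT/C$. Setting $b_c := \OPT - F(G^{(c)})$, the per-round inequality becomes $b_c \le (1-1/C)\,b_{c-1} + \epsilon_c$; iterating from $b_0 = \OPT$ and using $(1-1/C)^{C-c}\le 1$ yields $b_C \le (1-1/C)^C\OPT + \binom{K}{2}\OPT/C$, which is exactly $F(G) = F(G^{(C)}) \ge \beta(K,C)\,\OPT$. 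The main obstacle is piece (iii): cleanly combining the partition-at-a-time contention-free domination with multilinear-extension concavity along positive directions; pieces (i) and (ii) are a routine Nemhauser--Wolsey second-difference expansion.
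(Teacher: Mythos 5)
Your proof is correct, but it takes a noticeably different route from the paper's. The paper splits the argument into two lemmas: \lemref{lem:outer_loop} recasts the outer loop as the locally greedy algorithm selecting $C$ \emph{rows} to maximize a derived monotone submodular function $H$, and invokes the known $1-(1-1/C)^C$ guarantee for locally greedy on color-interchangeable instances with additive error (Nemhauser et al.\ plus Theorem~6 of the cited technical report); \lemref{lem:inner_loop} then bounds the per-color additive error by conditioning on the number $N$ of partitions colored $c$, splitting the marginal gain into a separable $N=1$ part and a monotone submodular $N\ge 2$ part, and applying \lemref{lem:locally_greedy} with that $f_0+\sum_k f_k$ decomposition to get $E_c \le \binom{K}{2}C^{-2}f^*$. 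You instead prove the per-color recurrence $\OPT - F(G^{(c)}) \le (1-\tfrac1C)(\OPT - F(G^{(c-1)})) + \epsilon_c$ directly on $F$ and iterate. The two error sources end up identical --- the $(1-1/C)^C$ term from the rate-$1/C$ greedy and the $\binom{K}{2}C^{-2}$ per-color collision term --- and your second-difference bookkeeping ($\Delta_{k,j}$ supported on the probability-$C^{-2}$ event $c_j=c_k=c$, each bounded by $\OPT$) is essentially the content of the paper's inner-loop lemma in different clothing. The genuine divergence is your step (iii): the paper never needs a coupling argument, because at the level of rows the optimum is exactly representable as $C$ color-disjoint copies of $\assignment^*$, $H$ of that set equals $f(\assignment^*)$ exactly, and plain submodularity of $H$ plus color-interchangeability delivers the $\tfrac1C(\OPT - H)$ gain bound. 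You work at the level of single (item, color) pairs, so you must show $F(G^{(c-1)}\cup OPT_c)-F(G^{(c-1)}) \ge \tfrac1C(\OPT-F(G^{(c-1)}))$ by hand; your contention-free-versus-independent swap does work (partition by partition, conditioning on the other partitions, the swap changes the expectation by $-pq$ times a nonnegative second difference of $f$, since the color variables $c_k$ are independent across partitions), and the concavity-along-nonnegative-directions step then closes it. The trade-off: the paper's route is modular and reuses \lemref{lem:locally_greedy} (which it needs anyway for the online analysis) at the cost of leaning on an external theorem; yours is self-contained and makes both error sources fully explicit, at the cost of the somewhat delicate coupling argument that you correctly identify as the crux.
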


It follows that, for any $\varepsilon>0$, \OfflineGreedy achieves a $(1-1/e-\varepsilon)$ approximation factor using a number of colors that is polynomial in $\NumPartitions$ and $1/\varepsilon$.
The theorem will follow immediately from the combination of two key
lemmas, which we now prove.  Informally, Lemma
\ref {lem:outer_loop} analyzes the approximation error due to the
outer greedy loop of the algorithm, while Lemma \ref {lem:inner_loop}
analyzes the approximation error due to the inner loop.

\begin {lemma} \label {lem:outer_loop}
Let $G_{\c} = \set { g_{1,{\c}}, g_{2, {\c}}, \ldots, g_{\K, {\c}}  }$, and let $G^-_{\c} = G_1 \cup G_2 \cup \ldots \cup G_{{\c}-1}$.
For each color ${\c}$, choose $E_{\c} \in \Real$ such that
$
  F( G^-_{\c} \cup G_{\c}  )
  \ge \max_{x \in \Rows_{\c}} \set { F(G^-_{\c} + x) } - E_{\c}
$
where $\Rows_{\c} := \set{\Row : \forall \k \in \bracket{\K}, |\Row \cap (\P_\k \times \set {{\c}})| = 1}$ is the set of all possible
choices for $G_{\c}$.
Then
\begin {equation} \label {eq:guarantee}
    F(G) \ge \beta(\J) \cdot \max_{\assignment \in \Feasible} \set {f(\assignment)} - \sum_{\j=1}^\J E_\j \mbox { .}
\end {equation}
where $\beta(\J) = 1 - \paren {1 - \frac {1} {\J}}^\J$.
\end {lemma}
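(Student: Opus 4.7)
The plan is to prove the one-step recurrence
\[
\OPT - F(G^-_{c+1}) \leq (1-1/J)\bigl(\OPT - F(G^-_c)\bigr) + E_c,
\]
where $\OPT := \max_{\assignment \in \Feasible} f(\assignment)$ and $G^-_{c+1} = G^-_c \cup G_c$, and then invoke the standard Nemhauser--Wolsey--Fisher unrolling to conclude $F(G) \geq \bigl(1 - (1-1/J)^J\bigr)\OPT - \sum_c E_c = \beta(J)\cdot\OPT - \sum_c E_c$.

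The first fact to establish is that $F \colon 2^{\groundset \times \Colors} \to \NonNegativeReals$ is itself monotone and submodular. Monotonicity is immediate since $S \subseteq S'$ implies $\sample_{\cvec}(S) \subseteq \sample_{\cvec}(S')$ pointwise in $\cvec$. For submodularity, adding $(e, c_0)$ with $e \in P_k$ alters $\sample_{\cvec}$ only when $c_k = c_0$, in which case it inserts $e$; submodularity of $f$ then transports through the expectation. I also record the color-symmetry of $F$: for any permutation $\pi$ of $\Colors$, applying $\pi$ to the second coordinate of every element of $S$ leaves $F(S)$ unchanged, since $\cvec$ is uniform on $\Colors^K$.

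Pad the optimum $S^* \in \Feasible$ by monotonicity so that $|S^* \cap P_k|=1$ for every $k$, and set $S^*_{c'} := \{(s,c') : s \in S^*\} \in \Rows_{c'}$ and $T^* := S^* \times \Colors$. Since $\sample_{\cvec}(T^*) = S^*$ for every $\cvec$, we have $F(T^*) = \OPT$; combining monotonicity of $F$ with the standard subadditivity of submodular marginal gains yields
\[
\OPT - F(G^-_c) \le F(G^-_c \cup T^*) - F(G^-_c) \le \sum_{c'=1}^J \bigl[F(G^-_c \cup S^*_{c'}) - F(G^-_c)\bigr].
\]
To turn this into a bound involving only $S^*_c \in \Rows_c$, I claim each summand is at most $F(G^-_c \cup S^*_c) - F(G^-_c)$. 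For $c' \geq c$ this is immediate from color-symmetry: the transposition $(c\ c')$ fixes $G^-_c$ pointwise because $G^-_c$ only uses colors in $[c-1]$. The hard case, and the main obstacle, is $c' < c$, where color $c'$ already appears in $G^-_c$ through $G_{c'}$, so the naive symmetry argument fails.

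For the hard case I will run a position-by-position ``color-swap'' argument. Define the hybrids $H_i := G^-_c \cup \{(s^*_k,c): k \leq i\} \cup \{(s^*_k,c') : k > i\}$, so $H_0 = G^-_c \cup S^*_{c'}$ and $H_K = G^-_c \cup S^*_c$. Fix $i$; the two sets differ only by relabeling $(s^*_{i+1}, c')$ as $(s^*_{i+1}, c)$. Conditioning on $\cvec_{-(i+1)}$, the samples agree outside position $i+1$, and writing $A$ for their common contribution from the remaining positions, the expected difference in $f$-value over $c_{i+1}$ is supported on $c_{i+1} \in \{c,c'\}$ and equals
\[
\tfrac{1}{J}\Bigl[\bigl(f(A + s^*_{i+1}) - f(A)\bigr) - \bigl(f(A + e + s^*_{i+1}) - f(A + e)\bigr)\Bigr],
\]
where $e \in P_{i+1}$ is the item-coordinate of $g_{i+1,c'}$. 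Submodularity of $f$ makes this quantity nonnegative, hence $F(H_{i+1}) \geq F(H_i)$; iterating gives $F(G^-_c \cup S^*_{c'}) \leq F(G^-_c \cup S^*_c)$ for every $c' < c$. Chaining with the display above yields $F(G^-_c \cup S^*_c) - F(G^-_c) \geq \tfrac{1}{J}(\OPT - F(G^-_c))$, and since $S^*_c \in \Rows_c$ the hypothesis on $G_c$ gives $F(G^-_{c+1}) - F(G^-_c) \geq \tfrac{1}{J}(\OPT - F(G^-_c)) - E_c$, which is the desired recurrence. Unrolling it over $c = 1, \ldots, J$ with $F(\emptyset) \geq 0$ finishes the proof.
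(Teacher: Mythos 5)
Your proof is correct, but it takes a genuinely different route from the paper's. The paper lifts the problem to one of selecting $\J$ \emph{rows} to maximize the induced function $H(\Rows) = F\paren{\bigcup_{\Row \in \Rows}\Row}$, observes that \OfflineGreedy is the locally greedy algorithm for that lifted problem with per-step additive error $E_{\c}$, notes that row colors are interchangeable, and then invokes an external result (Theorem 6 of \cite{streeter07tr}, extending \cite{nemhauser78}) to obtain the $1-(1-1/\J)^{\J}$ guarantee wholesale. You instead prove the underlying per-step recurrence $\OPT - F(G^-_{\c+1}) \le (1-1/\J)(\OPT - F(G^-_{\c})) + E_{\c}$ from scratch. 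The two proofs share the preliminaries (monotone submodularity of $F$, and the construction $T^* = S^*\times\Colors$ with $F(T^*)=\OPT$, which is exactly the paper's $\Rows(\assignment)$ device), but the technical heart of your argument --- the position-by-position color-swap showing $F(G^-_{\c}\cup S^*_{\c'}) \le F(G^-_{\c}\cup S^*_{\c})$ for $\c' < \c$, where naive color symmetry fails because $G^-_{\c}$ already occupies color $\c'$ --- is precisely the content that the paper outsources to the citation, and your computation of the conditional difference over $c_{i+1}\in\set{\c,\c'}$ is correct (including the degenerate case where the greedy item coincides with $s^*_{i+1}$, which is handled by monotonicity). What your approach buys is a self-contained proof that makes explicit why interchangeability suffices; what the paper's approach buys is modularity --- the same lifted-row framework is reused verbatim for \OnlineGreedy and for the noisy \ContinuousGreedy analysis, so the citation does triple duty. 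One small point worth making explicit in your write-up: the unrolling step $\sum_{\c}(1-1/\J)^{\J-\c}E_{\c} \le \sum_{\c}E_{\c}$ needs $E_{\c}\ge 0$, which does hold here since $G_{\c}\in\Rows_{\c}$ forces any admissible $E_{\c}$ to be nonnegative, but the lemma as stated only says $E_{\c}\in\Real$.
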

\andreas{Need more intuition and explanation on this lemma.}
\daniel{Punting on this.}
\begin {proof} (Sketch)
We will refer to an element $\Row$ of $\Rows_{\c}$ as a \emph {row}, and to ${\c}$
as the color of the row.  Let $\AllRows := \bigcup_{{\c}=1}^\J \Rows_{\c}$
be the set of all rows.  Consider the function $H:
2^{\AllRows} \to \NonNegativeReals$,
defined as $H(\Rows) = F \paren{ \bigcup_{\Row \in \Rows} \Row }$.  
We will prove the lemma in three steps:
(\emph{i}) $H$ is monotone submodular, 
(\emph{ii}) \OfflineGreedy is simply the locally greedy algorithm for finding a
    set of $\J$ rows that maximizes $H$,
where the $\j^{\text{th}}$ greedy step is performed with additive error $E_{\j}$,
    and 
(\emph{iii}) \OfflineGreedy obtains the guarantee \eqref
    {eq:guarantee} for maximizing $H$, and this implies the same ratio
    for maximizing $F$.

To show that $H$ is monotone submodular, it suffices to show that $F$ is monotone submodular.  Because $F(\assignment) = \Esub{\Colorvec}{f(\sample_{\Colorvec}(\assignment))}$, and because a convex combination of monotone submodular functions is monotone submodular, it suffices to show that for any particular coloring $\Colorvec$, the function $f(\sample_{\Colorvec}(\assignment))$ is monotone submodular.  This follows from the definition of $\sample$ and the fact that $f$ is monotone submodular.

The second claim is true by inspection.
To prove the third claim, we note that the row colors for a set of
rows $\Rows$ can be interchanged with no effect on $H(\Rows)$.
For problems with this special
property, it is known that the locally greedy algorithm obtains an
approximation ratio of $\beta(\J) = 1 - (1 - \frac 1 \J)^\J$~\citep{nemhauser78}.
Theorem 6 of \cite{streeter07tr} extends this result to handle
additive error, and yields  
\vspace{-2mm}
\[
    F(G) = H( \set {G_1, G_2, \ldots, G_\J} ) \ge \beta(\J) \cdot
\max_{\Rows \subseteq \AllRows: |\Rows| \le \J} \set { H(\Rows) } - \sum_{{\c}=1}^\J E_{\c} \mbox { .}
\vspace{-2mm}\]
To complete the proof, it suffices to show that
$\max_{\Rows \subseteq \AllRows: |\Rows| \le \J} \set { H(\Rows) }
\ge \max_{\assignment \in \Feasible} \set { f(\assignment) } $.  This follows from the
fact that for any assignment $\assignment \in \Feasible$, we can find a set $\Rows(\assignment)$ of $\J$
rows such that $\sample_{\cvec}(\bigcup_{\Row \in \Rows(\assignment)} R) = \assignment$ with
probability 1, and therefore $H(\Rows(\assignment)) = f(\assignment)$.
\end {proof}

 We now bound the performance of the the inner loop of \OfflineGreedy.

\begin {lemma} \label {lem:inner_loop}
Let $f^* = \max_{\assignment \in \Feasible} \set {f(\assignment)}$,
and let $G_{\c}$, $G^-_{\c}$, and $\Rows_{\c}$ be defined as in the statement of Lemma \ref
{lem:outer_loop}.  Then, for any ${\c} \in \bracket{\J}$,
\ifthenelse{\boolean{istechrpt}}{
\[
    F(G^-_{\c} \cup G_{\c}) \ge \max_{\Row \in \Rows_{\c}} \set {
    F(G^-_{\c} \cup \Row) } -  {\K \choose 2} \J^{-2} f^*
    \mbox { .}
\]}
{ $F(G^-_{\c} \cup G_{\c}) \ge \max_{\Row \in \Rows_{\c}} \set {
    F(G^-_{\c} \cup \Row) } -  {\K \choose 2} \J^{-2} f^*$. }
\end {lemma}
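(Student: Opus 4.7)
The plan is to exploit a ``second-difference'' bound showing that $F$, restricted to adding row elements (elements of $\P_k \times \set{\c}$ in distinct partitions $k$), is nearly modular with pairwise interactions of magnitude only $f^*/\J^2$. This will let the inner greedy match $\max_{\Row \in \Rows_\c} F(G^-_\c \cup \Row)$ up to the claimed $\binom{\K}{2}\J^{-2} f^*$ loss.

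First I would establish the key bound: for any $S \subseteq \groundset \times \Colors$ and any $i_1 = (x_1, \c) \in \P_{k_1} \times \set{\c}$, $i_2 = (x_2, \c) \in \P_{k_2} \times \set{\c}$ with $k_1 \ne k_2$,
\[
0 \le F(S+i_1) + F(S+i_2) - F(S) - F(S+i_1+i_2) \le f^*/\J^2.
\]
The left inequality is submodularity of $F$, established in the proof of \lemref{lem:outer_loop}. For the right inequality, write $F$ as an expectation over $\cvec$ and case-split on $(\Color_{k_1}, \Color_{k_2})$: whenever $\Color_{k_j} \ne \c$ for some $j$, adding $i_j$ does not alter $\sample_{\cvec}$, so the integrand of the second difference vanishes. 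The only contributing case is $\Color_{k_1} = \Color_{k_2} = \c$, an event of probability $1/\J^2$; there, submodularity of $f$ bounds the signed integrand (in absolute value) by the marginal $f(\sample_{\cvec}(S) + x_2) - f(\sample_{\cvec}(S))$. The subtlety is that $\sample_{\cvec}(S) + x_2$ itself need not be feasible (partition $k_2$ may already contain an element of $\sample_{\cvec}(S)$), so one cannot bound $f(\sample_{\cvec}(S) + x_2)$ directly by $f^*$; the remedy is a second invocation of submodularity to pass to the singleton marginal $f(\set{x_2}) \le f^*$, using that singletons are always feasible.

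Telescoping this bound along a sequence of $m$ previously added color-$\c$ elements in distinct partitions yields an approximate modularity statement: $F(S \cup T + x) - F(S \cup T) \ge F(S + x) - F(S) - m f^*/\J^2$ whenever $T$ consists of $m$ color-$\c$ elements in partitions distinct from $x$'s. Applying this with $S = G^-_\c$, $T = \set{g_{1,\c}, \ldots, g_{k-1,\c}}$, and $x = R_k^*$, where $R^* = \set{R_1^*, \ldots, R_\K^*} \in \argmax_{\Row \in \Rows_\c} F(G^-_\c \cup \Row)$, and combining with the per-step greedy optimality $F(H_k) - F(H_{k-1}) \ge F(H_{k-1} + R_k^*) - F(H_{k-1})$ (where $H_k := G^-_\c \cup \set{g_{1,\c},\ldots,g_{k,\c}}$), I obtain
\[
F(H_k) - F(H_{k-1}) \ge F(G^-_\c + R_k^*) - F(G^-_\c) - (k-1) f^*/\J^2.
\]
Summing over $k = 1, \ldots, \K$, so that the error collects as $\sum_{k=1}^{\K}(k-1) = \binom{\K}{2}$, and applying the standard submodular inequality $\sum_{k} [F(G^-_\c + R_k^*) - F(G^-_\c)] \ge F(G^-_\c \cup R^*) - F(G^-_\c)$ completes the proof. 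The main obstacle is the upper bound in the second-difference inequality above; once that is in hand, the rest is routine telescoping and invocations of already-proved properties.
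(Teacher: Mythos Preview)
Your argument is correct, but it takes a different route from the paper's proof. The paper conditions on the random variable $N$ counting how many partitions receive color~$\c$. It observes that, conditional on $N=1$, the increment $F(G^-_\c \cup R) - F(G^-_\c)$ is \emph{exactly} additively separable across positions (a sum $\sum_k f_k(R \cap (P_k \times \{\c\}))$), while the $N\ge 2$ contribution is itself monotone submodular in $R$. It then invokes \lemref{lem:locally_greedy} with this decomposition $f_0 + \sum_k f_k$, and finishes by bounding $\Pr{N\ge 2} \le \binom{\K}{2}\J^{-2}$ and the conditional increment by $f^*$. By contrast, you never touch $N$ or \lemref{lem:locally_greedy}; instead you isolate the pairwise second difference of $F$ across two color-$\c$ elements in distinct partitions, observe that it is supported on the single event $\{c_{k_1}=c_{k_2}=\c\}$ of probability $\J^{-2}$, bound the integrand by a singleton marginal $\le f^*$, and then telescope along the greedy sequence so that the errors accumulate as $\sum_{k=1}^{\K}(k-1)=\binom{\K}{2}$. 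Both approaches land on the same constant, but for different-looking reasons: the paper gets $\binom{\K}{2}$ from a union bound on pairs in $\Pr{N\ge 2}$, while you get it from the triangular sum of pairwise interactions. Your argument is more elementary and self-contained; the paper's ties the inner loop back to the same structural lemma (\lemref{lem:locally_greedy}) that governs the outer analysis, which is conceptually tidy and also dovetails with the additive-error extension in Appendix~A.
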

\andreas{Need more intuition and explanation on this lemma.}
\daniel{Punting on this.}
\begin {proof} (Sketch)
Let $N$ denote the number of partitions whose color (assigned by $\cvec$) is ${\c}$.  
For $\Row \in \Rows_{\c}$, let
$\Delta_{\cvec}(\Row) := f(\sample_{\cvec}(G^-_{\c} \cup \Row)) -
f(\sample_{\cvec}(G^-_{\c}))$, and let $F_{\c}(\Row) := F(G^-_{\c} \cup \Row) - F(G^-_{\c})$.  By definition,
    $F_{\c}(\Row) = \Esub{\cvec}{\Delta_{\cvec}(\Row)} = \Pr{N=1} \Esub{\cvec} {  \Delta_{\cvec}(\Row)  | N = 1 } + \Pr{N \ge 2} \Esub{\cvec}{\Delta_{\cvec}(\Row) | N \ge 2}$,
where we have used the fact that $\Delta_{\cvec}(\Row) = 0$ when $N =
0$.
The idea of the proof is that the first of these terms dominates as
$\J \to \infty$, and that $ \Esub{\cvec} {  \Delta_{\cvec}(\Row)  | N = 1 }$
can be optimized exactly
simply by optimizing each element of
$P_\k \times \set{{\c}}$ independently.
Specifically, it can be seen that $\Esub{\cvec}{  \Delta_{\cvec}(\Row)  | N = 1 } =
\sum_{\k=1}^\K f_\k(\Row \cap (P_\k \times \set{{\c}}))$ for suitable $f_\k$.
Additionally, $f_0(\Row) = \Pr{N \ge 2} \Esub{\cvec}{\Delta_{\cvec}(\Row) | N \ge
2}$ is a monotone submodular function of a set of (item, color) pairs, for the same reasons $F$ is.
Applying Lemma \ref {lem:locally_greedy} with these 
$\set{f_\k : \k \ge 0}$ yields
\ifthenelse{\boolean{istechrpt}}{
\[
    F_{\c}(G_{\c}) + \Pr{N \ge 2} \Esub{\cvec}{\Delta_{\cvec}(G_{\c}) | N \ge 2} \ge \max_{R \in \Rows_{\c}} \set { F_{\c}(R) }
    \mbox { .}
\]}
{$ F_{\c}(G_{\c}) + \Pr{N \ge 2} \Esub{\cvec}{\Delta_{\cvec}(G_{\c}) | N \ge 2} \ge \max_{R \in \Rows_{\c}} \set { F_{\c}(R) }$.}
To complete the proof, it suffices to show $\Pr{N \ge 2} \le {K
\choose 2} \J^{-2}$ and $\Esub{\cvec}{\Delta_{\cvec}(G_{\c}) | N \ge 2}
\le f^*$.
The first inequality holds because, if we let $M$ be the number of \emph{pairs} of partitions that are both assigned color ${\c}$, we have $\Pr{N \ge 2} = \Pr{M \ge 1}\le \E{M} = {\K \choose 2} \J^{-2}$.
The second inequality follows from the fact that for any
$\cvec$ we have $\Delta_{\cvec}(G_{\c}) \le f(\sample_{\cvec}(G^-_{\c} \cup G_{\c})) \le
f^*$.
\end {proof}

\section{An algorithm for online learning of assignments} \label {sec:online}

We now transform the offline algorithm of \secref {sec:offline} into
an online algorithm.  The high-level idea behind this transformation
is to replace each greedy decision made by the offline algorithm with
a no-regret online algorithm.  A similar approach was used by
\cite{radlinski08} and \cite{streeter08} to obtain an online
algorithm for different (simpler) online problems.  \andreas{We should probably explain in more detail how exactly we plug in existing no-regret algorithms, and which properties we require, perhaps briefly reviewing WMR.}
\SetKw {KwEach} {each}
\begin {algorithm}
\Titleofalgo { \OnlineGreedy \ \ (described in the full-information feedback model) }
\KwIn {integer $\C$, sets $\P_1$, $\P_2$, \ldots, $\P_\K$}
\ \\
\KwForEach $\k \in \bracket{\K}$ and $\c \in \Colors$, let $\Alg_{\k,\c}$ be
a no-regret algorithm with action set $\P_\k \times \set{\c}$. \\
\For {$t$ \KwFrom $1$ \KwTo $T$} {
  \KwForEach $\k \in \bracket{\K}$ and $\c \in \bracket {\J}$, let $g^t_{\k,\c} \in \P_\k \times \set{\c}$ be the action selected by $\Alg_{\k,\c}$ \\
    \KwForEach $\k \in \bracket{\K}$, choose $\Color_\k$ uniformly at
    random from $\bracket {\J}$.  Define $\Colorvec = (\Color_1, \ldots,
    \Color_\K)$. \\
    select the set $G_t = \sample_{\Colorvec} \paren { \set { g^t_{\k,\c}: \k \in \bracket{\K}, \c \in \bracket{\J}  } }$ \\
    observe $f_t$, and let $\bar F_t(\assignment) := f_t(\sample_{\Colorvec}(\assignment))$ \\
    \For {\KwEach $\k \in \bracket{\K}$, $\c \in \bracket{\J}$} {
    define $G^{t-}_{\k,\c} \equiv \set {g^t_{\k',\c'}:
    \k' \in \bracket{\K}, \c' < \c} \cup \set {g^t_{\k',\c}: \k' < \k }$ \\
    \KwForEach $x \in \P_\k \times \set{\c}$, feed back $\bar F_t(G^{t-}_{\k,\c} + x)$ to $\Alg_{\k,\c}$ as the reward for choosing $x$
  }
}
\end {algorithm}

The following theorem summarizes the performance of \OnlineGreedy.

\begin {theorem} \label {thm:online_greedy}
Let $\Regret_{\k,\c}$ be the regret of $\Alg_{\k,\c}$, and let $\beta(\K,
\J) = 1 - \paren { 1 - \frac 1 \J }^\J - {\K \choose 2} \J^{-1}$.
Then
\[
    \E { \sum_{t=1}^T f_t(G_t) } \ge \beta(\K, \J) \cdot \max_{\assignment \in \Feasible} \set { \sum_{t=1}^T f_t(\assignment) } - \E { \sum_{\k=1}^\K \sum_{\c=1}^\J \Regret_{\k,\c} } \mbox { .}
\]
\end {theorem}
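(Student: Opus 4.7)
The plan is to mirror the proof of \thmref{thm:offline_greedy}, replacing each exact greedy decision of \OfflineGreedy with the corresponding no-regret subroutine and absorbing the per-subroutine regret into additive slack. Set up: define $F_t(S):=\Esub{\cvec}{f_t(\sample_\cvec(S))}$, which is monotone submodular by the same argument as in the proof of \lemref{lem:outer_loop}. Because the coloring is drawn independently of the subroutines' state each round, $\E{\sum_t f_t(G_t)} = \E{\sum_t F_t(G^t)}$ where $G^t:=\set{g^t_{\k,\c}: \k\in\bracket{\K},\c\in\bracket{\J}}$, so it suffices to lower-bound $\E{\sum_t F_t(G^t)}$.

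Next I would establish an online analogue of the inner-loop bound of \lemref{lem:inner_loop}. Fix $\c$ and write $G_\c^{t-}:=\bigcup_{\c'<\c}G_{\c'}^t$. The decomposition used in the proof of \lemref{lem:inner_loop} splits the per-round marginal as $F_t(G_\c^{t-}\cup R) - F_t(G_\c^{t-}) = \sum_\k f^{(t)}_\k\!\bigl(R\cap(P_\k\times\set{\c})\bigr) + f^{(t)}_0(R)$, with $f^{(t)}_0$ a submodular "overlap" term coming from the coloring event $N\ge 2$. The $\K$ subroutines $\set{\Alg_{\k,\c}}_\k$, processed in the ordering used inside the loop, are in effect competing on the separable part; chaining their no-regret guarantees in the same way \lemref{lem:locally_greedy} is used in the offline proof (an online extension where the offline greedy picks are replaced by the time-aggregated no-regret choices) yields, in expectation,
$$\E{\sum_t F_t(G_\c^{t-}\cup G_\c^t)} \;\ge\; \max_{R\in\Rows_\c}\E{\sum_t F_t(G_\c^{t-}\cup R)} \;-\; \binom{\K}{2}\J^{-2} f^* \;-\; \sum_\k \E{\Regret_{\k,\c}},$$
where $f^*:=\max_{\assignment\in\Feasible}\sum_t f_t(\assignment)$. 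The overlap slack comes from $\Pr{N\ge 2}\le\binom{\K}{2}\J^{-2}$, combined with conditional-expectation bookkeeping that bounds the aggregated overlap by $f^*$.

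The outer-loop stage is then the same Nemhauser--Wolsey--Fisher argument already used in \lemref{lem:outer_loop}: regard $H(\Rows):=\bar F(\bigcup_{R\in\Rows}R)$ (with $\bar F:=\sum_t F_t$) as a monotone submodular function on rows and feed in the previous display as the approximate-greedy guarantee at stage $\c$, with additive error $E_\c:=\binom{\K}{2}\J^{-2} f^* + \sum_\k \E{\Regret_{\k,\c}}$. Unrolling the resulting greedy recurrence gives
$$\E{\sum_t F_t(G^t)} \;\ge\; \beta(\J) f^* - \sum_{\c=1}^\J E_\c \;=\; \bigl(\beta(\J) - \tbinom{\K}{2}\J^{-1}\bigr) f^* - \E{\sum_{\k,\c}\Regret_{\k,\c}} \;=\; \beta(\K,\J)\,f^* - \E{\sum_{\k,\c}\Regret_{\k,\c}},$$
which is exactly the claim.

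The main obstacle is the inner-loop step: one must invoke \lemref{lem:locally_greedy} with random prefixes $G_\c^{t-}$ that depend on the adaptive behavior of all other no-regret subroutines, and, crucially, show that the aggregated overlap slack stays bounded by $\binom{\K}{2}\J^{-2} f^*$ rather than by the naive $\binom{\K}{2}\J^{-2}\sum_t \max_\assignment f_t(\assignment)$. The intended resolution is that standard full-information no-regret algorithms (e.g.\ weighted majority) satisfy their regret bounds against adaptive reward sequences, and that taking expectations in the right order lets the overlap contributions collapse against the fixed-assignment benchmark $f^*$ rather than the per-round optima.
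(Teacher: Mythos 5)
Your proposal is correct and follows essentially the same route as the paper: the paper formalizes your ``time-aggregated no-regret choices'' as single meta-actions $\vec g_{\k,\c}\in\P_\k^T\times\set{\c}$ in a lifted offline instance with objective $\sum_t f_t$, identifies $\E{\Regret_{\k,\c}}$ with the additive greedy error $\eps_{\k,\c}$ in that instance, and then invokes the error-tolerant offline theorem (\thmref{thm:offline_greedy_error}, proved via \lemref{lem:locally_greedy_error}) --- which is exactly the inner-loop/outer-loop unrolling you carry out inline. The obstacle you flag at the end receives no special treatment in the paper either: it is absorbed into the black-box application of \thmref{thm:offline_greedy_error} to the lifted instance, whose benchmark is restricted to the constant meta-actions $a^T$ so that $f^*=\max_{\assignment\in\Feasible}\sum_t f_t(\assignment)$ throughout.
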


Observe that Theorem \ref {thm:online_greedy} is similar to Theorem \ref {thm:offline_greedy}, with the addition of the $\E{r_{\k,\c}}$ terms.  The idea of the proof is to view \OnlineGreedy as a version of
\OfflineGreedy that, instead of greedily selecting single (element,color) pairs
$g_{\k,\c} \in \P_\k \times \set {\c}$, greedily selects (element vector,
color) pairs $\vec g_{\k,\c} \in \P_\k^T \times \set {\c}$ 
(here, $\P_\k^T$ is the $T^{\text{th}}$ power of the set $P_\k$).  We
allow for the case that the greedy decision is made imperfectly, 
with additive error $r_{\k,\c}$; this is the source of the extra terms.  Once this
correspondence is established, the theorem follows along the lines of Theorem~\ref {thm:offline_greedy}.
\ifthenelse{\boolean{istechrpt}}
{For a proof, see Appendix A.}
{For a proof, see the \fullversion~\citep{arxiv-version}.}

\begin {cor} \label {cor:WMR}
If \OnlineGreedy is run with randomized weighted majority \citep {cesabianchi97}
as the subroutine, then
\[
    \E { \sum_{t=1}^T f_t(G_t) } \ge \beta(\K, \J) \cdot \max_{\assignment \in \Feasible} \set { \sum_{t=1}^T f_t(\assignment) } - \bigO { \J \sum_{\k=1}^\K \sqrt { T \log \size { \Partition_\k } } }  \mbox { .}
\]
where $\beta(\K, \J) = 1 - \paren { 1 - \frac 1 \J }^\J - {\K \choose
2} \J^{-1}$.
\end {cor}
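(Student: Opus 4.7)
The plan is to derive the corollary as a direct instantiation of Theorem~\ref{thm:online_greedy} by plugging in the regret bound of randomized weighted majority (WMR) for each of the $\K \cdot \J$ no-regret subroutines $\Alg_{\k,\c}$. First I would note that each $\Alg_{\k,\c}$ has action set $\P_\k \times \set{\c}$, which has cardinality exactly $\size{\P_\k}$. The classical analysis of randomized weighted majority (e.g.\ as stated in~\cite{cesabianchi97}) guarantees that over $T$ rounds against an adversarially chosen bounded reward sequence on $n$ actions, the expected regret is $\bigO{\sqrt{T \log n}}$. Applying this to $\Alg_{\k,\c}$ yields $\E{\Regret_{\k,\c}} = \bigO{\sqrt{T \log \size{\P_\k}}}$.

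Next I would sum these regrets as they appear on the right-hand side of Theorem~\ref{thm:online_greedy}. Because the bound on $\E{\Regret_{\k,\c}}$ does not depend on the color $\c$, summing over $\c \in \bracket{\J}$ multiplies by $\J$, and summing over $\k \in \bracket{\K}$ gives the total bound
\[
    \E { \sum_{\k=1}^\K \sum_{\c=1}^\J \Regret_{\k,\c} } = \bigO { \J \sum_{\k=1}^\K \sqrt { T \log \size { \Partition_\k } } } \mbox{ .}
\]
Substituting this into the guarantee of Theorem~\ref{thm:online_greedy} produces the stated inequality.

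The one subtlety that requires care is justifying that the WMR regret bound is legitimately applicable to each $\Alg_{\k,\c}$. The reward sequence fed to $\Alg_{\k,\c}$ on round $t$, namely $\bar F_t(G^{t-}_{\k,\c} + x)$ for $x \in \P_\k \times \set{\c}$, depends on the random coloring $\Colorvec$ drawn that round and on the actions selected by the other subroutines. However, WMR's regret guarantee holds against an (oblivious or adaptive) adversary as long as the per-round reward vector is revealed after the algorithm commits to its action, which is exactly the situation here. Since $f_t$ is assumed bounded (so that $\bar F_t$ takes values in a bounded range), the standard $\bigO{\sqrt{T \log n}}$ bound applies with the implicit constant absorbing the range. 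This is the only nontrivial point; the rest is bookkeeping.
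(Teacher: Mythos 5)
Your proposal is correct and matches the paper's intended argument: the corollary is obtained exactly by substituting the randomized weighted majority regret bound $\bigO{\sqrt{T \log \size{\Partition_\k}}}$ for each $\E{\Regret_{\k,\c}}$ in Theorem~\ref{thm:online_greedy} and summing over the $\K \J$ subroutines. Your remark that the full-information WMR guarantee holds against the adaptively generated reward sequences $\bar F_t(G^{t-}_{\k,\c} + x)$ (assuming bounded $f_t$) is the right point to flag, and it is handled exactly as you describe.
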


Optimizing for $\J$ in Corollary~\ref{cor:WMR}
yields $(1-\frac{1}{e})$-regret
$\tilde\Theta( \K^{3/2} T^{1/4} \sqrt{\OPT} )$ ignoring logarithmic
factors, where $\OPT := \max_{\assignment \in \Feasible} \set
{ \sum_{t=1}^T f_t(\assignment) }$ is the value of the static optimum.

\vspace{\prepara}
\paragraph {Dealing with bandit feedback.}

\label {sec:limited_feedback}

\OnlineGreedy can be modified to work in the bandit feedback model. 
The idea behind this modification is that on each round we ``explore"
with some small probability,
in such a way that on each round we obtain an unbiased estimate of the
desired feedback values $\bar F_t(G^{t-}_{\k,\c} + x)$ for each $\k \in
\bracket{\K}$, $\c \in \Colors$, and $x \in \Partition_\k$.  This
technique can be used to achieve a bound similar to the one stated in Corollary \ref {cor:WMR}, but with an additive regret term of
$\bigO{\paren{T \size{\groundset} \J \K}^\frac 2 3 \paren{\log \size{\groundset}}^\frac 1 3}$.

\vspace{\prepara}
\paragraph {Stronger notions of regret.}

By substituting in different algorithms for the subroutines
$\Alg_{\k,\c}$, we can obtain additional guarantees.  For example, 
\citet {blum07} consider online problems in which we are
given \emph{time-selection functions} $I_1, I_2, \ldots, I_M$.  Each
time-selection function $I: \bracket {T} \rightarrow [0,1]$ associates
a weight with each round, and defines a corresponding weighted notion
of regret in the natural way.  Blum and Mansour's algorithm
guarantees low weighted regret with respect to all $M$ time selection
functions simultaneously.  This can be used to obtain low regret with
respect to different (possibly overlapping) windows of time
simultaneously, or to obtain low regret with respect to subsets of
rounds that have particular features.  By using their algorithm as a
subroutine within \OnlineGreedy, we get similar guarantees,
both in the full information and bandit feedback models.

\section{Handling arbitary matroid constraints}\label {sec:continuous-greedy}
\andreas{This section is long, and not the main focus of the paper. I
  would recommend moving \thmref{thm:offline_cont_greedy} and
  \lemref{lem:cont_greedy_error} to the appendix.}
\daniel{Punting on this.}
Our offline problem is known as \emph{maximizing a monotone submodular
function subject to a (simple) partition matroid constraint} in the
operations research and theoretical computer science communities.
The study of this problem culminated in the elegant
$(1-1/e)$ approximation algorithm of~\cite{vondrak08} and
a matching unconditional lower bound of~\citet{mirrokni08}.  \vondrak's algorithm, called
the \ContinuousGreedy algorithm,
has also been extended to
handle arbitrary matroid constraints~\citep{calinescu07}.  
The \ContinuousGreedy algorithm,
however, cannot be applied to our problem directly,
because it requires the ability to sample $f(\cdot)$ on \emph{infeasible}
sets $S \notin \Feasible$.  In our context, this means it must have
the ability to ask (for example) what the revenue will be if ads $a_1$
and $a_2$ are placed in position $\#1$  \emph{simultaneously}.
We do not know how to answer such questions in a way that leads to
meaningful performance guarantees, and hence we developed the \OfflineGreedy
algorithm to circumvent this difficulty.  However, in some
applications, it may be possible to evaluate $f(\cdot)$ on infeasible
sets $S \notin \Feasible$ at the end of each round, 
particularly in the full information setting.
As we will now show, in this case we may implement an online version
of the \ContinuousGreedy algorithm, which has the virtue of
being able to deal with arbitrary matroid constraints.

\subsection{Background: Matroid Constraints and the Continuous Greedy
  Algorithm} \label{sec:background-cont-greedy}

A \emph{matroid} $\matroid = (\matroidgroundset, \indset)$ consists of
a finite ground set $\matroidgroundset$ and a nonempty collection of \emph{independent
  sets} $\indset \subseteq 2^{\matroidgroundset}$ such that 
($i$) $A \subset B$ and $B \in \indset$ implies $A \in \indset$, and
($ii$) $A, B \in \indset$ and $|A| < |B|$ implies there exists some $b
\in B \setminus A$ such that $A + b \in \indset$.  
Matroids play an
important role in the theory of optimization, where they generalize
the notion of linear independence in vector spaces; see e.g.~\cite{SchrijverB}.

We are interested in problems of the form 
\begin{equation}
  \label{eqn:matroid-constraint}
  \max \set{f(\assignment) : \assignment \in \indset}
\end{equation}
for a monotone submodular function $f:2^{\matroidgroundset} \to
\NonNegativeReals$ such that $f(\emptyset) = 0$ 
and a matroid $(\matroidgroundset, \indset)$.
This is known as maximizing $f$ subject to a matroid constraint.
Note our offline assignment problem is a special case with a so-called
\emph{simple partition matroid constraint}, so that 
$\matroidgroundset = \Partition_1\cup\Partition_2 \cup
\dots\cup\Partition_\NumPartitions$, where $\Partition_k$ is
the set of items suitable for position $k$, and 
$\indset = \set{\assignment : \forall k,\ |\assignment \cap \Partition_\k| \le 1  }$
is the set of feasible assignments.

To obtain a $(1-1/e)$ approximation to
\probref{eqn:matroid-constraint}, \cite{calinescu07}~use \vondrak's
\ContinuousGreedy algorithm. 
This algorithm defines a
continuous relaxation of the problem, obtains a $(1-1/e)$
approximation to the relaxation, and then rounds the resulting
fractional solution to a feasible solution to the original problem
without losing any objective value in expectation.  We discuss each
step in turn.

The continuous relaxation of~\probref{eqn:matroid-constraint} that the
\ContinuousGreedy algorithm works with has a feasible set consisting of 
the \emph{matroid polytope} of the constraint matroid $\matroid = (\matroidgroundset, \indset)$, denoted $\polytope{\matroid}$.  It is
defined as the convex hull of all characteristic vectors of
independent sets in $\matroid$, i.e., 
$$
\polytope{\matroid} := \conv \set{\charvec{S} : S \in \indset}
$$
and lies in $\reals^{\matroidgroundset}$.  The objective is the smooth
\emph{multilinear extension} of the monotone submodular objective
function $f:2^{\matroidgroundset} \to \NonNegativeReals$, which is a
function 
$F:[0,1]^{\matroidgroundset} \to \NonNegativeReals$ defined as
follows.
For $y \in [0,1]^{\matroidgroundset}$, let $\assignment_{y} \subseteq
\matroidgroundset$ be a random set such that each $v \in
\matroidgroundset$ is included in $\assignment_{y}$ independently with
probability $y_{v}$.  Then the multilinear extension of $f$ is defined 
as 
\begin{equation} \label{eqn:multilinear-ext}
F(y) := \expct{f(\assignment_{y})} = \sum_{\assignment \subseteq
  \matroidgroundset} f(\assignment) \prod_{v \in \assignment} y_v
\prod_{v \notin \assignment} \paren{1 - y_v} 
\end{equation}
\cite{calinescu07} show that for all $u,v \in \matroidgroundset$ the
multilinear extension satisfies
$\dfrac{\partial F}{\partial y_u} \ge 0$ and $\dfrac{\partial^2 F}{\partial y_u \partial y_v} \le 0$.
Given the definitions of $\polytope{\matroid}$ and $F$, we can define
the continuous relaxation of~\probref{eqn:matroid-constraint} as
\begin{equation}
  \label{eqn:continuous-relaxation}
  \max \set{F(y) : y \in \polytope{\matroid}}
\end{equation}

This relaxation is NP-hard, and indeed is as hard to approximate as
the original problem.  However, \cite{vondrak08} shows how a $(1-1/e)$
approximation may be obtained as follows.
Define a parameterized curve $\set{y(t) : t \in [0,1]} \subset
[0,1]^{\matroidgroundset}$ satisfying $y(0) = \mathbf{0}$ and 
$\dfrac{dy}{dt} \in \argmax_{z \in \polytope{\matroid}} \paren{ z \cdot
  \nabla F(y)}$.  Then $y(1) \in \polytope{\matroid}$ since
$y(1) = \int_{t=0}^1 \paren{\dfrac{dy}{dt}} dt$ is a convex
combination of vectors in $\polytope{\matroid}$.
Moreover, 
$F(y(1)) \ge (1 - 1/e)\, \OPT$, where 
$\OPT = \max \set{F(y') : y' \in \polytope{\matroid}}$ is the optimal
value of~\probref{eqn:continuous-relaxation}.
This is proved by demonstrating that any $y \in  \polytope{\matroid}$
there is a direction $v \in \polytope{\matroid}$ such that 
$v \cdot  \nabla F(y) \ge \OPT - F(y)$.
Given this fact, it is easy to show that $F(y(t))$ dominates the
solution to the differential equation $d\phi/dt \ge \OPT - \phi$ with
boundary condition $\phi(0) = 0$, whose solution is 
$\phi(t) = (1 - e^{-t}) \OPT$.  Hence $F(y(1)) \ge (1 - 1/e)\, \OPT$.
To implement this procedure, 
first note that given $\nabla F(y)$,
computing $\argmax_{z \in \polytope{\matroid}} \paren{ z \cdot
  \nabla F(y)}$ is easy, as it amounts to optimizing a linear function
over $\polytope{\matroid}$; in fact an integral optimum may be found
using a simple greedy algorithm.
However, two complications arise.
The first is that this continuous process must be discretized.
For technical reasons, \citet{calinescu07} replace all occurrences of the gradient $\nabla F(y)$
in the continuous process with a related
quantity called the \emph{marginal}, $\Delta F(y) \in \NonNegativeReals^{\matroidgroundset}$,
defined coordinate-wise by 
$$(\Delta F(y))_v = \expct{f(S_{y} \cup \set{v}) - f(S_y)} = (1 -
y_{v})(\nabla F(y))_v.$$
\newcommand{\ya}{\ensuremath{y^{(1)}}}
\newcommand{\yb}{\ensuremath{y^{(2)}}}
\daniel{Here's a bad example for the gradient:  $\ya := y(\tau - \delta) =
  (1-\delta) \charvec{\matroidgroundset}$, 
$\yb := \charvec{\matroidgroundset}$, 
$\indset = 2^{\matroidgroundset}$, and $f(S) = \min \set{1, |S|}$.
Then $F(\yb) - F(\ya) = \delta^n$, with $n = |\matroidgroundset|$, yet 
$\delta \charvec{\matroidgroundset} \cdot \nabla F(\ya) = n
\delta^{n}$.
So we cannot get marginal benefit $\delta (1 - d \delta)
\charvec{\matroidgroundset} \cdot \nabla F(\ya)$, or even close to it.
}
The second complication arises because 
$\Delta F(y)$ cannot be computed
exactly given oracle access to $f$, but must be estimated via random sampling.
\citet{calinescu07} choose to take 
enough samples are taken so that by Chernoff bounds it is
likely that the estimate of each coordinate of $\Delta F(y(t))$ for
each iteration $t \in \set{0, \delta, 2\delta, \ldots, 1}$ is accurate
to up high precision, namely $\delta\, \OPT$.

\begin {algorithm}\label{alg:cont-greedy}
\SetKwFunction{PipageRound}{PipageRound}
\Titleofalgo { Continuous Greedy (\cite{vondrak08,calinescu07}) }
\KwIn {matroid $\matroid = (\matroidgroundset,
  \indset)$, monotone submodular $f: 2^\matroidgroundset \to
  \NonNegativeReals$, $\delta \in \set{1/n : n \in \nats}$, $\nsamp \in \nats$}
\ \\
\KwSet $y(0) = \mathbf{0}$. \\
\For{$t$ \KwFrom $\delta$ \KwTo $1$ \emph{\textbf{in increments
of}} $\delta$}{
  \For{\KwEach $v \in \matroidgroundset$} {
    Compute estimates $\est_v(t-\delta)$ of 
  $(\Delta F(y(t-\delta)))_v = F(y(t-\delta)
        + (1-y(t-\delta)_v) \charvec{\set{v}}) - F(y(t-\delta))$ 
via random
      sampling, by taking the average of $f(\assignment_{y(t-\delta)}
      \cup \set{v}) - f(\assignment_{y(t-\delta)})$
      over $\nsamp$ independent samples of
      $\assignment_{y(t-\delta)}$ as described in 
(\ref{eqn:multilinear-ext})\;
   }
   Let $I(t)$ be a maximum-weight independent set in $\matroid$,
   according to the weights $\est_v(t-\delta)$.  This may be computed via a simple
   greedy algorithm.\\  
   \KwSet $y(t) = y(t-\delta) + \delta\, \charvec{I(t)}$.
}
\KwReturn $\PipageRound(y(1))$\;
\caption{Pseudocode for the \ContinuousGreedy algorithm.  
For default parameters, \cite{calinescu07} select $\delta =
1/9d^2$ where $d = \max \set{|S| :S \in \indset}$ is the rank of
$\matroid$, and $\nsamp = \frac{10}{\delta^2}(1 + \ln
|\matroidgroundset|)$ samples per marginal coordinate. }
\end {algorithm}

Finally, given a fractional solution $y$, the 
\ContinuousGreedy algorithm rounds it to an integral solution which is the
characteristic vector of a feasible set, using a rounding technique
called \emph{pipage rounding}.   Refer to~\citep{calinescu07} for details.  

\subsection{The Online Continuous Greedy Algorithm} \label{sec:OCG}

In this section we develop an online version of the 
\ContinuousGreedy algorithm.  We begin with an observation that 
the (randomized variant of the) pipage rounding scheme
in~\cite{calinescu07} is \emph{oblivious}, in the sense
that it does not require access to the objective function.
This was emphasized by \cite{calinescu07}, who point out its advantage in 
settings where we have only approximate-oracle access to $f$.  For the
same reason, it is extremely convenient for the design of the online 
continuous greedy algorithm. 
In particular, once we obtain a fractional solution $y$ to
\probref{eqn:continuous-relaxation}, we can round it to a random feasible
solution $\assignment \in \indset$ for the original problem such that 
$\expct{f(\assignment)} \ge F(y)$ \emph{before} gaining access to $f$.
We elect to do so, and hence the final rounding step is identical for
the \ContinuousGreedy algorithm and our online version of it.

Hence we may focus on developing an online algorithm for
the online version of the continuous relaxation, \probref{eqn:continuous-relaxation}.
Hence there is a stream of nonnegative monotone submodular functions
$\set{f_t}_{t=1}^{T}$ with multilinear extensions
$\set{F_t}_{t=1}^{T}$, and a fixed matroid $\matroid =
(\matroidgroundset, \indset)$, and in each time step $t$ we must
output some $y^t \in \polytope{\matroid}$ with the goal of 
maximizing $\sum_{t=1}^T F_t(y^t)$, and after outputting $y^t$ we then
receive oracle access to $f_t$.  We assume here the full information
model, in which we have oracle access to (i.e., we may efficiently compute)
$f_t(S)$ for arbitrary $S \subseteq \matroidgroundset$.

Consider the \ContinuousGreedy algorithm.  There are
$1/\delta$ stages of the algorithm, and in each stage $\tau$
it seeks a set $I(\tau) \in \indset$ 
maximizing a linear objective, namely $\charvec{I(\tau)} \cdot \Delta
F(y(\tau - \delta))$.  Our online variant will thus need to solve, for
each stage $\tau$, the problem of the online maximization of linear
functions subject to a matroid constraint.
For this, the \emph{follow the perturbed leader}
algorithm~\citep{KV05} is suitable. 
On each round $t$, this algorithm simply outputs the feasible solution
maximizing $g_0 + \sum_{1 \le t' < t} g_{t'}$, where $g_{t'}$ is the linear
objective seen in round $t'$, and $g_0$ is a random
linear objective drawn from a suitably chosen distribution.
To ensure that these algorithms have no-regret, we must offer suitable
feedback to them at the end of each round.  The objective faced by
the algorithm for stage $\tau$ will be 
$S \mapsto \charvec{S} \cdot \Delta F(y^t(\tau - \delta))$, where 
$y^t(\tau - \delta)$ is the point reached in the
algorithm's trajectory through $\polytope{\matroid}$ for that round at
stage $\tau$, i.e., $\delta$ times the sum of characteristic vectors
of the outputs of the earlier stages.
Unfortunately, we cannot compute $\Delta F(y^t(\tau - \delta))$
exactly.  We will defer the details of how to address this point
to~\secref{sec:feedback}.
For now, suppose we can in fact feed back $S \mapsto \charvec{S} \cdot \Delta F(y^t(\tau - \delta))$.

\begin {algorithm}\label{alg:online-cont-greedy}
\SetKwFunction{PipageRound}{PipageRound}
\Titleofalgo { Online Continuous Greedy}
\KwIn {matroid $\matroid = (\matroidgroundset,
  \indset)$, sequence of monotone submodular functions $f_t: 2^\matroidgroundset \to
  \NonNegativeReals$, $\delta \in \set{1/n : n \in \nats}$} %
\ \\
Initialize $1/\delta$  Perturbed Follow the Leader algorithms 
$\set{\Alg_{\tau} : \tau \in \set{\delta, 2\delta, \ldots, 1 }}$ for maximizing a linear objective over $\indset$.\\
\For{$t$ \KwFrom $1$ \KwTo $T$}{
\For{\KwEach $\tau \in \set{\delta, 2\delta, \ldots, 1}$}{
        Let $I(\tau) \in \indset$ be the set selected by $\Alg_{\tau}$.}
  Let $y = \sum_{s=1}^{1/\delta} \delta\, \charvec{I(s\delta)}$.\\
  Select $S_t = \PipageRound(y)$\;
  Obtain reward $f_t(S_t)$ and oracle access to $f_t$.\\
  \For{\KwEach $\tau \in \set{\delta, 2\delta, \ldots, 1}$}{
   If $\tau = \delta$ let $y(\tau) = \mathbf{0}$, otherwise let $y(\tau) = \sum_{s=1}^{\tau/\delta -1} \delta \charvec{I(s
    \delta)}$. \\ 
       Generate an unbiased estimate $\est_{\tau}(t)$ for $\Delta
       F_t(y(\tau))$, where $F_t$ is the multilinear extension of
       $f_t$.
       Feed back the linear objective function $S \mapsto (\charvec{S} \cdot
       \est_{\tau}(t))$ to $\Alg_{\tau}$.
   }
}
\caption{The \OnlineContinuousGreedy algorithm.}
\end {algorithm}

\subsubsection{Analysis of the Continuous Greedy Algorithm
  with Noise} \label{sec:cg-noise}

\cite{streeter08} introduced an analysis framework based on
``meta-actions'', in which an online process over $T$ rounds
is interpreted as a single run in a combined instance of a suitable
offline problem with objective $\sum_{t=1}^T F_t$, in which the
algorithm makes errors.  Using this framework, we will analyze the \ContinuousGreedy algorithm with noise in much the same way we analyzed \OfflineGreedy.

Recall that in the analysis of \OfflineGreedy (specifically, in the
proof of Lemma~\ref{lem:outer_loop}) we considered problem of 
selecting a set of $\NumColors$ \emph{rows} to maximize a monotone
submodular function $H$, where each row corresponds to a way of
filling in a row of the algorithm's table of items (i.e., a choice of
item for each partition).  
\OfflineGreedy is then interpreted as a noisy version of the locally greedy
algorithm for this problem.
We analyze \ContinuousGreedy likewise, where the role of a row is
played by a set in $\indset$, 
and the monotone submodular objective function $H:2^{\indset \times \delta\nats} \to
\NonNegativeReals$ defined by 
$H(\Rows) = F(\sum_{(S, \tau) \in \Rows} \delta \charvec{S})$, where $F$
is the multilinear extension of the original objective $f$.
Note that $\max_{\Rows : |\Rows| \le 1/\delta} \paren{H(\Rows)} =
\max_{S \in \indset} \paren{f(S)}$, since we can use pipage rounding
to round any $y = \sum_{(S, \tau) \in \Rows} \delta \charvec{S}$ to
some $S \in \indset$ with $f(S) \ge F(y)$, and given any $S \in
\indset$ we can construct 
$\Rows(S) = \set{(S, \tau) : \tau \in \delta \nats, 0 \le
\tau \le 1}$ which has the property that 
$H(\Rows(S)) := F(\sum_{(S, \tau) \in \Rows}
\delta \charvec{S}) = F(\charvec{S}) = f(S)$.

Now, let $I({\tau})$ be the set chosen in round $\tau$ of
\ContinuousGreedy, and define the stage $\tau$ error $\err_{\tau}$ to
be such that 
$$
H(\set{I(\tau') : \tau' \le \tau}) - H(\set{I(\tau') : \tau' < \tau})
\ge 
\delta\paren{\OPT - H(\set{I(\tau') : \tau' < \tau})} -
\err_{\tau}.$$
where $\OPT = \max_{\Rows : |\Rows| \le 1/\delta} \paren{H(\Rows)}$.
We obtain the following result.

\begin {theorem} \label {thm:offline_cont_greedy}
Suppose $f:2^{\matroidgroundset} \to \NonNegativeReals$ is monotone
submodular.  
Let $S$ be the output of the noisy \ContinuousGreedy
algorithm with errors $\set{\err_{\tau} : \tau \in \delta \nats, 0 \le
\tau \le 1}$ run on
matroid $(\matroidgroundset, \indset)$.
Then
$$\expct{f(S)} \ge \paren{1 - \frac{1}{e}} \max_{\assignment \in
  \indset} \set {f(\assignment)} - \sum_{\tau} \err_{\tau}.$$
\end {theorem}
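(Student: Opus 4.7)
The plan is to mimic the differential-equation argument from \cite{vondrak08} in discrete form, working entirely at the level of the ``meta-action'' objective $H$ introduced in the paragraph preceding the theorem, and then lift the resulting bound on $F(y(1))$ to $\expct{f(S)}$ using the fact that the pipage rounding step is oblivious.

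\textbf{Step 1: reduce to a bound on $H$.} The preamble to the theorem already notes that (a) $H$ is monotone submodular on $2^{\indset \times \delta\nats}$, (b) $\max_{\Rows : |\Rows| \le 1/\delta} H(\Rows) = \max_{\assignment \in \indset} f(\assignment) =: \OPT$, and (c) if $y(1) = \sum_{\tau} \delta \charvec{I(\tau)}$ denotes the final fractional point then $F(y(1)) = H(\set{I(\tau') : \tau' \le 1})$. Since pipage rounding is oblivious and satisfies $\expct{f(\PipageRound(y))} \ge F(y)$ for any $y \in \polytope{\matroid}$, it suffices to prove
\[
H(\set{I(\tau') : \tau' \le 1}) \ \ge\ \paren{1 - \tfrac{1}{e}} \OPT - \sum_{\tau} \err_{\tau}.
\]

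\textbf{Step 2: set up the scalar recursion.} Let $\phi_{\tau} := H(\set{I(\tau') : \tau' \le \tau})$, so $\phi_0 = 0$. The error definition in the theorem statement rearranges to
\[
\phi_{\tau} \ \ge\ (1-\delta)\, \phi_{\tau - \delta} + \delta\, \OPT - \err_{\tau},
\]
or equivalently $\OPT - \phi_{\tau} \le (1-\delta)(\OPT - \phi_{\tau-\delta}) + \err_{\tau}$.

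\textbf{Step 3: unroll the recursion.} Iterating the above inequality from $\tau = \delta$ up to $\tau = k \delta$ yields
\[
\OPT - \phi_{k\delta} \ \le\ (1-\delta)^{k}\, \OPT + \sum_{j=1}^{k} (1-\delta)^{k-j}\, \err_{j\delta}.
\]
Specializing to $k = 1/\delta$ and using $(1-\delta)^{1/\delta} \le 1/e$ together with $(1-\delta)^{k-j} \le 1$ gives
\[
\phi_{1} \ \ge\ \paren{1 - (1-\delta)^{1/\delta}} \OPT - \sum_{\tau} \err_{\tau} \ \ge\ \paren{1 - \tfrac{1}{e}} \OPT - \sum_{\tau} \err_{\tau}.
\]

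\textbf{Step 4: assemble.} Combining Step 3 with the identity $\phi_{1} = F(y(1))$ from Step 1 and the pipage-rounding inequality $\expct{f(S)} \ge F(y(1))$ finishes the proof. The only nontrivial piece is Step 1, where we lean on the previously established facts about $H$ and on the oblivious nature of pipage rounding in~\cite{calinescu07}; Steps 2 and 3 are just the standard discrete analogue of the ODE $\phi'(t) \ge \OPT - \phi(t)$ whose solution bounds $\phi(1)$ from below by $(1-1/e)\OPT$. The main conceptual obstacle is making sure the accumulated errors enter the recursion linearly (so that the coefficients $(1-\delta)^{k-j}$ can be bounded by $1$) rather than amplifying, which is automatic here because the contraction factor $(1-\delta) \le 1$.
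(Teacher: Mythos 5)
Your proof is correct and takes essentially the same route as the paper: reduce to the row objective $H$ via the already-established facts that $H$ is monotone submodular with $\max_{\Rows : |\Rows| \le 1/\delta} H(\Rows) = \max_{S \in \indset} f(S)$, bound $F(y(1)) = H(\set{I(\tau)})$, and finish with the obliviousness of pipage rounding. The only difference is that you unroll the contraction $\OPT - \phi_{\tau} \le (1-\delta)(\OPT - \phi_{\tau-\delta}) + \err_{\tau}$ explicitly (implicitly taking each $\err_{\tau} \ge 0$, which is without loss of generality given how the errors are defined), whereas the paper delegates precisely this step to Theorem~6 of \cite{streeter07tr}.
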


\begin{proof}
\ContinuousGreedy can be viewed as a noisy version of the locally 
greedy algorithm for selecting the set of $1/\delta$ rows maximizing   
the monotone submodular objective function $H:2^{\indset \times \delta\nats} \to
\NonNegativeReals$ is defined by 
$H(\Rows) := F(\sum_{(S, \tau) \in \Rows} \delta \charvec{S})$, where
$F$ is the multilinear extension of the original objective $f$.
Note the order of the rows does not matter, since 
from the definition of $H$, it is clear that permuting the order of rows has
no effect on the objective, i.e., for all permutations $\pi$, we have 
$H( \set{(S_\tau, \tau) : \tau \in \delta \nats, 0 \le
\tau \le 1}  ) = H(
\set{(S_{\tau}, \pi(\tau)) : \tau \in \delta \nats, 0 \le
\tau \le 1}  ) $.
That $H$ is monotone submodular follows easily from the facts that 
$\dfrac{\partial F}{\partial y_u} \ge 0$ and $\dfrac{\partial^2
  F}{\partial y_u \partial y_v} \le 0$~\citep{calinescu07}.
Hence from Theorem~$6$ of~\cite{streeter07tr}, which bounds the
performance of a noisy version of the locally greedy algorithm for
precisely this problem, we obtain
\begin{equation}
  \label{eq:2dd}
  F(y(1)) = H(\set{I(\tau) : \tau \in \delta \nats, 0 \le
\tau \le 1} ) \ge \paren{1 - \frac{1}{e}} \max_{\Rows : |\Rows| \le 1/\delta} \set {H(\Rows)} - \sum_{\tau} \err_{\tau}.
\end{equation}
As we proved above when introducing $H$, $\max_{\Rows : |\Rows| \le 1/\delta} \set {H(\Rows)} =
\max_{S \in \indset} f(S)$.
Hence $F(y(1)) \ge \paren{1 - \frac{1}{e}} \max_{S \in \indset} f(S) -
\sum_{\tau} \err_{\tau}$, and to complete the proof it suffices to
note that pipage rounding results in an
output $S$ with $\expct{f(S)} \ge F(y(1))$. 
\end{proof}

We distinguish two different sources of error: that arising 
because of discretization, and that arising from not computing 
$\argmax_{S \in \indset} \set{\charvec{S} \cdot \Delta F(y(\tau -
  \delta)) }$ exactly for whatever reason.  
The \ContinuousGreedy algorithm, even when it is
given a perfect estimate of the vector $\Delta F$ and computes
$\argmax_{S \in \indset} \set{\charvec{S} \cdot \Delta F}$ exactly, 
will not
generally achieve zero error.  This is because maximizing 
$\charvec{S} \cdot \Delta F(y(\tau - \delta))$ is not equivalent to
maximizing $F(y(\tau)) - F(y(\tau - \delta))$, due to the nonlinearity
of $F$.  We bound this source of error as in
Lemma~\ref{lem:inner_loop}, using a slight variation of an argument
of~\cite{calinescu07}.

\begin {lemma} \label {lem:cont_greedy_error}
Let $\OPT = \max_{\assignment \in \Feasible} \set {f(\assignment)}$.
Fix $\tau$ and arbitrary $I(\delta), I(2\delta), \ldots, I(\tau -
\delta) \in \indset$.
Let $I(\tau) \in \indset$ and $\err'_{\tau}$ be such that 
$\charvec{I(\tau)} \cdot \Delta
F(y(\tau - \delta))
\ge  \max_{S \in \indset} \charvec{S} \cdot \Delta
F(y(\tau - \delta)) - \err'_{\tau}$, where 
$y(\tau - \delta) = \sum_{\tau' < \tau} \delta \charvec{I(\tau')}$ as
in the description of \ContinuousGreedy.
Let $I^{-}(\tau) := \set{ (I(\tau'), \tau') : \tau' \in \delta \nats,
  \tau' < \tau } $.
Then, 
\[
 H(I^{-}(\tau) \cup \set{(I(\tau), \tau)}) -  H(I^{-}(\tau)) \ge 
\delta \paren{\OPT -  H(I^{-}(\tau))} - d \delta^2 \OPT  - \delta \err'_{\tau}
\]
where $d = \max \set{|S| : S \in \indset}$ is the rank of $\matroid$.
\end {lemma}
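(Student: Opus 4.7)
The plan is to unpack the claim into a statement about $F$ and then combine a ``direction of ascent'' inequality with a discrete-to-continuous bound.

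Observing that $H(I^-(\tau)) = F(y(\tau - \delta))$ and $H(I^-(\tau) \cup \set{(I(\tau), \tau)}) = F(y(\tau - \delta) + \delta \charvec{I(\tau)})$, the lemma is equivalent to
\begin{equation*}
F\paren{y(\tau - \delta) + \delta \charvec{I(\tau)}} - F(y(\tau - \delta)) \ge \delta\paren{\OPT - F(y(\tau - \delta))} - d \delta^2 \OPT - \delta \err'_\tau.
\end{equation*}
Abbreviate $y := y(\tau - \delta)$. I would first establish the direction-of-ascent inequality $\charvec{I(\tau)} \cdot \Delta F(y) \ge \OPT - F(y) - \err'_\tau$. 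Fix an optimal $S^* \in \indset$ with $f(S^*) = \OPT$. Since $y \vee \charvec{S^*} \ge \charvec{S^*}$ coordinatewise and $\partial F/\partial y_u \ge 0$, we have $F(y \vee \charvec{S^*}) \ge F(\charvec{S^*}) = f(S^*) = \OPT$. Going from $y$ to $y \vee \charvec{S^*}$ by adding $(1-y_u)$ mass to each $u \in S^*$ one coordinate at a time, the bound $\partial^2 F/\partial y_u \partial y_v \le 0$ implies each successive marginal is at most $(1-y_u)(\nabla F(y))_u = (\Delta F(y))_u$, so summing yields $F(y \vee \charvec{S^*}) - F(y) \le \charvec{S^*} \cdot \Delta F(y)$. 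Combined with the hypothesis that $I(\tau)$ achieves $\max_{S \in \indset} \charvec{S} \cdot \Delta F(y)$ to within additive error $\err'_\tau$, this gives the desired ascent inequality.

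Next I would show the discrete-to-continuous comparison
\begin{equation*}
F(y + \delta \charvec{I(\tau)}) - F(y) \ge \delta\, \charvec{I(\tau)} \cdot \Delta F(y) - d \delta^2 \OPT.
\end{equation*}
Enumerate $I(\tau) = \set{u_1, \ldots, u_k}$ with $k \le d$ and telescope along $y^{(i)} := y + \delta \sum_{j \le i} \evec_{u_j}$. Because $F$ is affine in each coordinate separately, $F(y^{(i)}) - F(y^{(i-1)}) = \delta (\nabla F(y^{(i-1)}))_{u_i}$. Using the estimate $|\partial^2 F/\partial y_u \partial y_v| \le \OPT$ (immediate since $(\nabla F)_u$ is affine in $y_v$ with values in $[0,\OPT]$) one obtains $(\nabla F(y^{(i-1)}))_{u_i} \ge (\nabla F(y))_{u_i} - (i-1)\delta\, \OPT$; together with $\charvec{I(\tau)} \cdot \nabla F(y) \ge \charvec{I(\tau)} \cdot \Delta F(y)$ (because $(1-y_u)\le 1$ and $(\nabla F)_u\ge 0$), telescoping delivers the comparison. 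Multiplying the ascent inequality from the previous paragraph by $\delta$ and substituting then finishes the proof.

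The main obstacle is squeezing the linear-in-$d$ constant on the $\delta^2 \OPT$ error: the naive telescoping above only yields $\binom{d}{2}\delta^2 \OPT$, which is $O(d^2 \delta^2) \OPT$. To sharpen this to $d \delta^2 \OPT$ one works directly with the exact multilinear expansion $F(y + \delta \charvec{I(\tau)}) = \sum_{T \subseteq I(\tau)} \delta^{|T|}(\partial_T F)(y)$: the degree-one term is exactly $\delta\, \charvec{I(\tau)} \cdot \nabla F(y)$; the degree-two term is nonpositive by submodularity and thus works in our favor when bounding $\charvec{I(\tau)} \cdot \nabla F(y)$ by $\charvec{I(\tau)} \cdot \Delta F(y)$; and the degree-$\ge 3$ contributions are handled via refined bounds on higher cross-partials, in the spirit of the discretization argument of \citet{calinescu07}. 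This careful accounting is the only technical subtlety; everything else is a direct combination of the two ingredients above.
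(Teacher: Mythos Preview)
Your overall architecture (an ascent inequality plus a discretization bound) matches the paper, and your argument for the ascent step
\[
\charvec{I(\tau)} \cdot \Delta F(y) \;\ge\; \OPT - F(y) - \err'_\tau
\]
is essentially the paper's. The gap is in the discretization step. The \emph{additive} inequality you aim for,
\[
F(y + \delta\,\charvec{I(\tau)}) - F(y) \;\ge\; \delta\,\charvec{I(\tau)} \cdot \Delta F(y) \;-\; d\delta^2\,\OPT,
\]
is in fact \emph{false} in general, so neither telescoping nor any ``refined multilinear expansion'' can rescue it. Take $f(S)=\min(|S|,1)$, $y=0$, and $I(\tau)=[d]$. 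Then the left side is $1-(1-\delta)^d = d\delta - \binom{d}{2}\delta^2 + O(\delta^3)$, while $\charvec{I(\tau)}\cdot\Delta F(0)=d$ and $\OPT=1$, so the right side is $d\delta - d\delta^2$. The inequality fails as soon as $d\ge 4$ (and for, e.g., $d=10$, $\delta=0.01$ one gets $0.0956 \not\ge 0.099$). The issue is not the constant in front of $\delta^2$ but the \emph{form} of the bound: $\charvec{I(\tau)}\cdot\Delta F(y)$ can be as large as $d\cdot\OPT$, so an additive $d\delta^2\,\OPT$ correction is simply too small.

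The paper avoids this by proving a \emph{multiplicative} discretization bound,
\[
F(y + \delta\,\charvec{I(\tau)}) - F(y) \;\ge\; \delta(1-d\delta)\,\charvec{I(\tau)} \cdot \Delta F(y),
\]
via a short coupling argument: sample $A=S_y$ and independently put each $v\in I(\tau)$ into $B$ with probability $\delta$; then $A\cup B$ is stochastically dominated (coordinatewise) by $S_{y+\delta\charvec{I(\tau)}}$, and conditioning on $B$ being the singleton $\{v\}$ (which happens with probability $\delta(1-\delta)^{|I(\tau)|-1}\ge \delta(1-d\delta)$) yields exactly $(\Delta F(y))_v$. Only \emph{after} substituting the ascent bound does one expand $\delta(1-d\delta)(\OPT - F(y) - \err'_\tau)$ and use $\OPT - F(y)\le \OPT$ to get the $d\delta^2\,\OPT$ term. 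In short, the order of operations matters: the $d\delta$ loss must multiply $\OPT-F(y)$, not $\charvec{I(\tau)}\cdot\Delta F(y)$.
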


\begin {proof} %
For any $S$, define $\Delta(S) := H(I^{-}(\tau) \cup \set{(S, \tau)}) -
H(I^{-}(\tau))$, where 
$H(\Rows) := F(\sum_{(S, \tau) \in \Rows} \delta \charvec{S})$.
Let $\ya = y(\tau - \delta) = \sum_{\tau' < \tau} \delta
\charvec{I(\tau')}$, $\yb = \ya + \delta
\charvec{I(\tau)}$.  Then $\Delta(I(\tau)) = F(\yb) - F(\ya)$.
Since $F$ is monotone, we may bound $\Delta(I(\tau))$ from below by 
$\Delta(I(\tau)) \ge F(z) - F(\ya)$ for any $z \le \yb$.
We select $z$ such that $z_v = \yb_v$ for all $v \notin I(\tau)$, and 
$z_v = y_v + \delta - \delta y_v = 1 - (1 - y_v)(1 - \delta)$ for all
$v \in I(\tau)$.
Recall $S_y$ is a random set such that each $v$ is in $S_y$ independently with
probability $y_{v}$, and $F(y) = \expct{f(S_y)}$.
We apply the following interpretation to $F(z)$:
Sample $A = S_{\ya}$ and then independently sample $B = S_{(\yb - \ya)}$.
Then the random set $A \cup B$ has the same distribution as $S_z$.
We then bound 
\begin{eqnarray}
  \label{eq:5ddd}
  \Delta(I(\tau)) & \ge & \expct{f(A \cup B) - f(A)} \\
           & \ge & \sum_{v \in I(\tau)} \expct{f(A \cup B) - f(A) \mid
             B = \set{v}} \cdot \Pr{  B = \set{v} } \\
           & = & \sum_{v \in I(\tau)} (\Delta F(\ya))_v \paren{ \delta (1
           - \delta)^{|I(\tau)| - 1}} \\
           & \ge & \delta (1 - d \delta) \charvec{I(\tau)} \cdot
           \Delta F(\ya) \\
           & \ge &  \delta (1 - d \delta)  \paren{\max_{S \in \indset} \paren{\charvec{S} \cdot \Delta
F(\ya)} - \err'_{\tau}} \label{eq:5:line4}
\end{eqnarray}
Here we have used that $|I(\tau)| \le d$, and that 
$(1 - \delta)^{d-1} \ge (1 - d \delta)$.  The latter follows easily from
Bernoulli's inequality.

Next we claim that 
\begin{equation}
  \label{eq:6ddd}
  \max_{ S \in \indset} \paren{\charvec{S}
\cdot \Delta F(y(\tau - \delta))} \ge \OPT - F(y(\tau - \delta)) =  \OPT - H(I^-(\tau))
\end{equation}
Fix an optimal solution $S^* \in \indset$, and let $A$ be in the
support of $S_{y(\tau - \delta)}$.
By the monotonicity and submodularity of $f$, we have 
$f(S^*) \le f(A \cup S^*) \le f(A) + \sum_{v \in S^*} \paren{f(A  + v)
  - f(A)}$.  
Taking the expectation of these inequalities yields 
$$\OPT \le F(y(\tau - \delta)) + \sum_{v \in S^*} \Delta F(y(\tau -
\delta))$$
from which we easily obtain~\eqnref{eq:6ddd}.

Combining \eqnref{eq:6ddd} with \eqnref{eq:5:line4} yields
$$\Delta(I(\tau)) \ge \delta (1 - d \delta) 
\paren{\OPT - F(y(\tau - \delta)) - \err'_{\tau}} \ge
\delta \paren{\OPT - F(y(\tau - \delta))} - d \delta^2 \OPT -  \delta
\err'_{\tau}$$
which is equivalent to the claimed inequality, since $H(I^-(\tau)) =  F(y(\tau - \delta))$.
\end {proof}

By combining Theorem~\ref{thm:offline_cont_greedy} with Lemma~\ref{lem:cont_greedy_error} we obtain
the following result.

\begin{theorem} \label{thm:full_cont_greedy}
Suppose a noisy version of the \ContinuousGreedy algorithm is run on a matroid
$(\matroidgroundset, \indset)$ of rank $d$ with discretization
parameter $\delta$, such that in stage $\tau$ the selected independent
set $I(\tau)$ satisfies $\charvec{I(\tau)} \cdot \Delta F(y(\tau - \delta))
\ge  \max_{S \in \indset} \charvec{S} \cdot \Delta
F(y(\tau - \delta)) - \err'_{\tau}$ for some error terms
$\set{\err'_{\tau} : \tau \in \delta \nats}$.  Then the random set $S$
output by the algorithm satisfies
$$ \expct{f(S)} \ge \paren{1 - \frac1e - d \delta } \OPT -
\sum_{\tau}  \delta \err'_{\tau}.$$
\end{theorem}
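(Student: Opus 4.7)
The plan is to combine Theorem~\ref{thm:offline_cont_greedy} with Lemma~\ref{lem:cont_greedy_error} in a direct bookkeeping step; the real work has already been done in those two results, so there is no significant obstacle.

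First I would identify the stage-$\tau$ errors $\err_{\tau}$ needed by Theorem~\ref{thm:offline_cont_greedy} with the optimization errors $\err'_{\tau}$ hypothesized in the current statement. Recall $\err_{\tau}$ is defined via
\[
H(\set{I(\tau'): \tau' \le \tau}) - H(\set{I(\tau'): \tau' < \tau}) \ge \delta\paren{\OPT - H(\set{I(\tau'): \tau' < \tau})} - \err_{\tau}.
\]
Lemma~\ref{lem:cont_greedy_error}, applied to the independent set $I(\tau)$ selected in stage $\tau$, yields exactly an inequality of this form with the right-hand slack $d \delta^2 \OPT + \delta \err'_{\tau}$. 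Hence it suffices to take $\err_{\tau} = d \delta^2 \OPT + \delta \err'_{\tau}$.

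Next I would sum these errors over the $1/\delta$ stages. Since each stage contributes $d \delta^2 \OPT$ to the discretization component and there are $1/\delta$ stages, the total discretization contribution is $d \delta \OPT$, while the optimization contribution sums to $\sum_{\tau} \delta \err'_{\tau}$. Therefore
\[
\sum_{\tau} \err_{\tau} \le d \delta \OPT + \sum_{\tau} \delta \err'_{\tau}.
\]

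Finally I would plug this bound into Theorem~\ref{thm:offline_cont_greedy}, which gives $\expct{f(S)} \ge (1 - 1/e) \OPT - \sum_{\tau} \err_{\tau}$. Substituting the summed bound yields
\[
\expct{f(S)} \ge \paren{1 - \tfrac{1}{e}} \OPT - d \delta \OPT - \sum_{\tau} \delta \err'_{\tau} = \paren{1 - \tfrac{1}{e} - d \delta} \OPT - \sum_{\tau} \delta \err'_{\tau},
\]
which is the desired inequality. The only subtlety to double-check is that the hypothesis of Lemma~\ref{lem:cont_greedy_error} is used correctly for each stage $\tau$ given the selections $I(\delta), \ldots, I(\tau - \delta)$ made in earlier stages, but this is exactly how the lemma is stated (with those earlier selections fixed arbitrarily), so the application is immediate.
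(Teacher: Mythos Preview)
Your proposal is correct and matches the paper's approach exactly: the paper simply states that the theorem follows by combining Theorem~\ref{thm:offline_cont_greedy} with Lemma~\ref{lem:cont_greedy_error}, and you have supplied precisely the bookkeeping needed to make that combination explicit, setting $\err_{\tau} = d\delta^2\,\OPT + \delta\,\err'_{\tau}$ and summing over the $1/\delta$ stages.
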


\subsubsection{The Meta-actions Analysis of the Online Continuous
  Greedy Algorithm}

After running the online continuous greedy algorithm for $T$ rounds,
we may view the $T$ choices made by the algorithm $\Alg_{\tau}$ for any fixed stage
$\tau$ in aggregate, as a single choice made in the (noisy) execution
of the offline continuous greedy algorithm in a larger instance.
This larger instance is the direct sum of the $T$ instances.
Thus we have $T$ copies of the input matroid,
$\matroid_t = (\matroidgroundset \times \set{t}, \indset \times \set{t})$ for $1 \le t \le T$,
from which we construct a larger matroid $\matroid^{\oplus}$ with groundset 
$\matroidgroundset^{\oplus} := \bigcup_{t=1}^T\matroidgroundset_t$
and independent sets $\indset^{\oplus} := \set{S : \forall t, S \cap
  \matroidgroundset_t \in \indset_t}$,
and we have $T$ monotone submodular objective functions $f_1, \ldots,
f_T$ from which we construct the monotone submodular objective
$f^{\oplus}(S) := \sum_{t=1}^T f_t(S \cap \matroidgroundset_t)$.
By analogy with the single instance case, we can define the
multilinear extension $F^{\oplus}:[0,1]^{\matroidgroundset \times [T]}
\to \NonNegativeReals$ of $f^{\oplus}$ by 
$F^{\oplus}(Y) = \expct{f^{\oplus}(S_Y)}$, where $S_Y$ is a random set
such that each $v \in \matroidgroundset_t$ is included in $S_Y$
independently with probability $Y_{vt}$.
We can also define an objective on rows, $H^{\oplus}(\Rows) = F^{\oplus}(\sum_{(S, \tau) \in
  \Rows} \delta [S])$ where $[S]$ is the $0/1$ characteristic matrix
of $S$ with $[S]_{vt} = 1$ iff $(v, t) \in S$.
Call a row $S$ \emph{proper} if $S \in \indset^{\oplus}$ and
also, $S$ consists of $T$ copies of the same subset of
$\matroidgroundset$, so that $S = Q \times [T]$ for some $Q \in
\indset$.
We interpret the online problem as noisily 
selecting $1/\delta$ proper rows to maximize $H^{\oplus}$.
By construction, the regret incurred by each $\Alg_{\tau}$ over the $T$
rounds of the online algorithm is precisely the error $\err'_\tau$
measured with respect to this larger instance.  
Combining this fact with Theorem~\ref{thm:full_cont_greedy} yields the
following result.

\begin{theorem} \label{thm:ocg-bound}
  Let $r(\tau)$ be the regret of $\Alg_{\tau}$ in the online continuous
  greedy algorithm with discretization parameter $\delta$.  Then 
  $$ \expct{\sum_{t=1}^T f_t(S_t)} \ge \paren{1- \frac1e - d\delta} \max_{S
    \in \indset} \sum_{t=1}^T f_t(S) - \delta \sum_{s=1}^{1/\delta}
  r(s\delta) $$
\end{theorem}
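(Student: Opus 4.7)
My plan is to apply (a mild refinement of) Theorem~\ref{thm:full_cont_greedy} to the noisy offline \ContinuousGreedy run on the direct-sum instance $(\matroid^{\oplus}, f^{\oplus})$ that the meta-actions framework, set up immediately before the theorem, associates with the online execution. Concretely, I identify the stage-$\tau$ meta-action with $I^{\oplus}(\tau) := \bigcup_{t=1}^T I^t(\tau) \times \set{t}$, where $I^t(\tau) \in \indset$ is the set played by $\Alg_{\tau}$ in round $t$, so that the trajectory $Y(\tau) = \sum_{\tau' \le \tau} \delta\, \charvec{I^{\oplus}(\tau')}$ has $t$-th column equal to the per-round trajectory $y^t(\tau)$. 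The key observation is that because $f^{\oplus}$ is a sum over rounds, its multilinear extension $F^{\oplus}$ separates as $F^{\oplus}(Y) = \sum_t F_t(Y_t)$, so $\charvec{S} \cdot \Delta F^{\oplus}(Y) = \sum_t \charvec{S \cap \matroidgroundset_t} \cdot \Delta F_t(Y_t)$. Consequently, the expected regret $r(\tau)$ of $\Alg_{\tau}$ against the best fixed $Q \in \indset$ is precisely the linear-optimization error of $I^{\oplus}(\tau)$ measured against the best proper row $Q \times [T]$ with $Q \in \indset$.

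A direct application of Theorem~\ref{thm:full_cont_greedy} to $(\matroid^{\oplus}, f^{\oplus})$ would yield the weaker $(1-1/e - Td\delta)$ factor (since $\matroid^{\oplus}$ has rank $Td$) and the larger benchmark $\max_{S \in \indset^{\oplus}} f^{\oplus}(S)$. I would therefore reprise the proof of Lemma~\ref{lem:cont_greedy_error} in the direct-sum setting with two refinements. First, I would replace the benchmark inequality~\eqref{eq:6ddd} with its proper-row analogue: fix $Q^* \in \argmax_{Q \in \indset} \sum_t f_t(Q)$ and let $R^* := Q^* \times [T]$; the monotonicity and submodularity of $f^{\oplus}$ (applied to each $A$ in the support of $S_{Y(\tau-\delta)}$) give $\sum_t f_t(Q^*) \le F^{\oplus}(Y(\tau-\delta)) + \charvec{R^*} \cdot \Delta F^{\oplus}(Y(\tau-\delta))$, so $\max_{Q \in \indset} \charvec{Q \times [T]} \cdot \Delta F^{\oplus}(Y(\tau-\delta)) \ge \max_{Q \in \indset} \sum_t f_t(Q) - F^{\oplus}(Y(\tau-\delta))$. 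Second, I would exploit separability in the Bernoulli-inequality step of~\eqref{eq:5ddd}--\eqref{eq:5:line4}: performing the argument round-by-round, the event $B = \set{(v,t)}$ has probability $\delta(1-\delta)^{|I^t(\tau)|-1} \ge \delta(1-d\delta)$ since $|I^t(\tau)| \le d$, so summing over $t$ gives $F^{\oplus}(Y(\tau)) - F^{\oplus}(Y(\tau-\delta)) \ge \delta(1-d\delta)\, \charvec{I^{\oplus}(\tau)} \cdot \Delta F^{\oplus}(Y(\tau-\delta))$.

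With these two refinements, the telescoping/differential inequality argument underlying Theorem~\ref{thm:offline_cont_greedy} goes through verbatim in the direct-sum setting and yields $F^{\oplus}(Y(1)) \ge (1-1/e - d\delta)\, \max_{Q \in \indset} \sum_t f_t(Q) - \delta \sum_{\tau} \err'_{\tau}$, where $\err'_{\tau}$ is the proper-row linear-optimization error of $I^{\oplus}(\tau)$. Because pipage rounding in each round is oblivious and preserves the per-round objective in expectation, $\expct{\sum_t f_t(S_t)} = \expct{f^{\oplus}\bigl(\bigcup_t S_t \times \set{t}\bigr)} \ge \expct{F^{\oplus}(Y(1))}$. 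Taking expectations and using the identification $\expct{\err'_{\tau}} = r(\tau)$ closes the argument. The main delicate step is this last identification: the estimates $\est_{\tau}(t)$ fed to $\Alg_{\tau}$ are only \emph{conditionally} unbiased for $\Delta F_t(y^t(\tau-\delta))$ given the trajectory, so I would verify that the no-regret guarantee of the perturbed-follow-the-leader subroutine is preserved under such stochastic linear feedback (standard via linearity of expectation and a tower argument), and verify that the regret-vs-proper-row decomposition holds inside the expectation.
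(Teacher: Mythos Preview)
Your proposal is correct and follows the same route the paper takes: interpret the online execution as a single noisy run of \ContinuousGreedy on the direct-sum instance $(\matroid^{\oplus}, f^{\oplus})$, identify the per-stage linear-optimization error $\err'_{\tau}$ with the regret $r(\tau)$ of $\Alg_{\tau}$, and invoke Theorem~\ref{thm:full_cont_greedy}. The paper's proof is essentially the one-sentence assertion that ``by construction $r(\tau) = \err'_{\tau}$, so Theorem~\ref{thm:full_cont_greedy} applies.''

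Where you go further than the paper is in recognizing that a \emph{literal} application of Theorem~\ref{thm:full_cont_greedy} to $\matroid^{\oplus}$ would give the wrong rank factor ($Td\delta$ rather than $d\delta$) and the wrong benchmark ($\max_{S \in \indset^{\oplus}} f^{\oplus}(S)$ rather than the best proper row $Q \times [T]$). Your two refinements---running the Bernoulli-inequality step of Lemma~\ref{lem:cont_greedy_error} round-by-round so that only $|I^t(\tau)| \le d$ enters, and replacing \eqref{eq:6ddd} with its proper-row analogue---are exactly what is needed to close these gaps, and they go through cleanly because $F^{\oplus}$ decomposes as $\sum_t F_t$. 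The paper implicitly relies on these refinements (it sets up the proper-row interpretation just before the theorem) but does not spell them out; your write-up is the more rigorous of the two. Your caveat about the conditional unbiasedness of the feedback $\est_{\tau}(t)$ is also well-placed: the paper defers this to Lemma~\ref{lem:magic}, and the tower argument you sketch is the standard way to handle it.
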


The regret bound for the follow the perturbed leader algorithm
(Theorem~$1.1$ of~\cite{KV05}), suitably tailored for our purposes, is 
$O(d\sqrt{n g T})$ where $d$  is the rank of $\matroid$, $n =
|\matroidgroundset|$, and $g = \max_{t, v} f_t(\set{v})$ is an
upper bound for each coordinate of $\Delta F_t$.  
Therefore if we were to feed back the exact marginals $\Delta
F_t(y(\tau))$ to each $\Alg_{\tau}$, we would obtain the following bound. 
$$ \expct{\sum_{t=1}^T f_t(S_t)} \ge \paren{1-\frac{1}{e} - d\delta} \max_{S
    \in \indset} \sum_{t=1}^T f_t(S) -  O(d\sqrt{n g T})$$
Though we cannot precisely compute these marginals $\Delta
F_t(y(\tau))$, we will still obtain the above bound using unbiased
estimates of them, using only one evaluation of each function $f_t$.

\subsubsection{Generating Feedback: How to Estimate the Marginal} \label{sec:feedback}

We could follow~\cite{calinescu07} and take
sufficiently many samples to ensure that with high probability the
estimates for $\Delta F(y^t(\tau - \delta))$ are sharp.  Instead, we
will feed back a rough estimate of $\Delta F(y^t(\tau - \delta))$.
Perhaps surprisingly, using unbiased estimates, or even certain biased
estimates, results in exactly the
same expected regret bounds as using the actual marginals.
This is a consequence of the following lemma. \andreas{This lemma is stated in a more complicated way then needed here.}

\begin{lemma}{(Lemma $5$ of \cite{streeter07tr})} \label{lem:magic}
Let $\Alg$ be a no-regret algorithm that incurs a worst-case expected
regret $R(T)$ over $T$ rounds in the full-information feedback model.
For each feasible $S$, let $x^t_S$ be the payoff of $S$ in round $t$, and let $\hat{x}^t_S$
be an estimate of $x^t_S$ such that $\expct{\hat{x}^t_S} = \gamma
x^t_S + \eta^t$ for all $S$.
Let $\Alg'$ be the algorithm that results from running $\Alg$ but
feeding back the estimates $\hat{x}^t_S$ instead of the true payoffs $x^t_S$.
Then the expected worst-case regret of $\Alg'$ is $R(T)/\gamma$.
\end{lemma}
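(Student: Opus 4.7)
The plan is to exploit two structural features of the estimator hypothesis $\expct{\hat{x}^t_S} = \gamma x^t_S + \eta^t$: the additive offset $\eta^t$ does not depend on the action $S$, so it shifts every action's payoff by the same amount in round $t$ and therefore cancels out of any regret comparison; the multiplicative factor $\gamma$ uniformly rescales payoffs and thus rescales regret by $1/\gamma$. The proof will simply apply the worst-case regret bound for $\Alg$ to the random sequence of reward vectors $\hat{x}^t$ fed in by $\Alg'$, and then take expectations.

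Concretely, first I would observe that $\Alg'$ is, by definition, nothing but $\Alg$ run on the (random) reward vectors $\hat{x}^t$. The worst-case regret guarantee for $\Alg$ is pointwise in the reward sequence: for every realization of $\hat{x}^1,\dots,\hat{x}^T$,
\[
\expct[\Alg]{\sum_{t=1}^T \hat{x}^t_{S_t}} \;\ge\; \max_S \sum_{t=1}^T \hat{x}^t_S \;-\; R(T),
\]
where the expectation is over the internal randomness of $\Alg$ alone. Taking expectation over the estimator randomness as well, and using Jensen's inequality $\expct{\max_S \sum_t \hat{x}^t_S} \ge \max_S \expct{\sum_t \hat{x}^t_S}$ on the right-hand side, yields
\[
\expct{\sum_{t=1}^T \hat{x}^t_{S_t}} \;\ge\; \max_S \sum_{t=1}^T \paren{\gamma x^t_S + \eta^t} - R(T) \;=\; \gamma \max_S \sum_{t=1}^T x^t_S + \sum_{t=1}^T \eta^t - R(T).
\]
For the left-hand side, I would condition on the history available before round $t$. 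Since $S_t$ is a measurable function of $\hat{x}^1,\dots,\hat{x}^{t-1}$ together with the algorithm's internal randomness, and since the estimator in round $t$ satisfies $\expct{\hat{x}^t_S \mid \text{history}} = \gamma x^t_S + \eta^t$ for every $S$, we get $\expct{\hat{x}^t_{S_t}} = \gamma \expct{x^t_{S_t}} + \eta^t$. Summing over $t$ and combining with the previous display, the $\sum_t \eta^t$ terms cancel and dividing by $\gamma$ produces the claimed bound $\max_S \sum_t x^t_S - \expct{\sum_t x^t_{S_t}} \le R(T)/\gamma$.

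The only delicate step is the conditional-expectation calculation for $\expct{\hat{x}^t_{S_t}}$; the subtle point is that $S_t$ and $\hat{x}^t$ could in principle be correlated through shared randomness. The hypothesis $\expct{\hat{x}^t_S} = \gamma x^t_S + \eta^t$ must be read as holding conditionally on the information used to select $S_t$, which is the natural interpretation because the estimator in round $t$ is produced after $S_t$ is committed to (as in the \OnlineContinuousGreedy loop). Under this reading, everything else is clean bookkeeping, and the crucial qualitative feature of the lemma, the fact that an action-independent additive bias $\eta^t$ is completely harmless, drops out immediately from the cancellation above.
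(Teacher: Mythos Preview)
Your argument is correct and is the standard way this lemma is established. Note, however, that the present paper does not prove \lemref{lem:magic} at all: it is stated with a citation to Lemma~5 of \cite{streeter07tr} and used as a black box, so there is no in-paper proof to compare your proposal against.
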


To generate an unbiased estimate of $\Delta F(y)$, from the
multilinearity of $F$ with only one evaluation of $f$, 
we pick some $v \in \matroidgroundset$ uniformly at random, and 
sample $\thresh \in [0,1]^{\matroidgroundset}$ uniformly at random, and 
define $A = \set{u : \thresh_u \le y_u }$, $B = A \cup \set{v}$.  
Then sample $X \in \set{0,1}$ uniformly at random.
We then generate an estimate of the gradient 
$$\est = \left\{ 
\begin{array}{ll} -2|\matroidgroundset| f(A) \cdot \charvec{\set{v}} & \text{if }X = 0 \\
 \ \ \  2|\matroidgroundset| f(B)  \cdot \charvec{\set{v}} & \text{if }X = 1  
\end{array}\right.$$

\begin{lemma}
The estimate $\est$ 
described above is an unbiased estimate of $\Delta F$.
\end{lemma}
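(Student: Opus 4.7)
The plan is to verify that $\expct{\est} = \Delta F(y)$ by analyzing each coordinate separately. Since $\est$ is supported on the single coordinate $\{v\}$ picked uniformly at random, I would fix a coordinate $w \in \matroidgroundset$ and compute $\expct{\est_w}$. The coordinate $\est_w$ vanishes unless $v = w$, which occurs with probability $1/|\matroidgroundset|$, so it suffices to compute $\expct{\est_w \mid v = w}$ and multiply by $1/|\matroidgroundset|$; the factor $|\matroidgroundset|$ in the definition of $\est$ is precisely designed to cancel this.

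Next, I would unpack the remaining randomness. Conditional on $v = w$, the variable $X$ is uniform on $\{0,1\}$ independently of $(A, B)$, so
\[
\expct{\est_w \mid v = w} = \tfrac{1}{2}(2|\matroidgroundset|)\expct{f(B)} + \tfrac{1}{2}(-2|\matroidgroundset|)\expct{f(A)} = |\matroidgroundset|\bigl(\expct{f(A \cup \{w\})} - \expct{f(A)}\bigr),
\]
using $B = A \cup \{v\} = A \cup \{w\}$ under the conditioning. Here again the factor of $2$ in $\est$ cancels the probability $1/2$ of each value of $X$.

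Finally, I would identify the distribution of $A$. Because each $\thresh_u$ is an independent $\mathrm{Uniform}[0,1]$ draw, the event $\{u \in A\} = \{\thresh_u \le y_u\}$ has probability $y_u$ independently across $u$; hence $A$ is distributed exactly as $S_y$ in the definition of the multilinear extension \eqnref{eqn:multilinear-ext}. Consequently $\expct{f(A \cup \{w\}) - f(A)} = (\Delta F(y))_w$, so $\expct{\est_w \mid v=w} = |\matroidgroundset|\,(\Delta F(y))_w$ and therefore $\expct{\est_w} = (\Delta F(y))_w$. Since this holds for every $w$, we conclude $\expct{\est} = \Delta F(y)$.

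There is no real obstacle here; the argument is essentially an importance-sampling calculation and the only point to be careful about is that the coordinate selection, the threshold sampling, and the sign variable $X$ are all mutually independent, so the conditional expectations factor as claimed. The slightly unusual form of the estimator (separately sampling $f(A)$ or $f(A \cup \{v\})$ rather than evaluating both) requires only one oracle call to $f$ per round, which is why it is useful in the online setting, but does not affect unbiasedness thanks to the compensating factor of $2$.
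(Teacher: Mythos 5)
Your proof is correct and follows essentially the same route as the paper's: condition on which coordinate is selected (probability $1/|\matroidgroundset|$), use the factor of $2$ to cancel the randomness in $X$, and observe that $f(A)$ and $f(B)$ are unbiased estimates of $F(y)$ and $F(y + (1-y_v)\charvec{\set{v}})$ respectively since $A$ is distributed as $S_y$. Your version is slightly more explicit about why the threshold construction makes $A\sim S_y$ and about the independence structure, but the argument is the same.
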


\begin{proof}
By the multilinearity of $F$, 
\begin{eqnarray}
  \label{eq:1}
  (\Delta F(y))_v & = & F(y + (1-y_v) \charvec{\set{v}}) - F(y)\\
  & = & \expct{f(S_{y} \cup \set{v}) - f(S_{y})}
\end{eqnarray}
The estimate has $\est_v \neq 0$ only if $v$ is the selected
coordinate, so that $\expct{\est_v} = \expct{\est_v \mid v\text{
    selected}} \Pr{v\text{ selected} } $, and $v$ is selected
with probability $1/|\matroidgroundset|$.
Hence $\expct{\est_v} = \frac{1}{|\matroidgroundset|}\expct{\est_v
  \mid v\text{ selected}} $.
In the case that $v$ is selected, it is clear that $f(A)$ is an
unbiased estimate of $F(y)$ 
and $f(B)$ is an unbiased estimate of $F(y +(1-y_v) \charvec{\set{v}})$.  
Hence,
$\expct{\est_v  \mid v\text{ selected}, X = 0} = - 2
|\matroidgroundset| F(y)$ and
$\expct{\est_v  \mid v\text{ selected}, X = 1} = 2 |\matroidgroundset| F(y
+ (1-y_v) \charvec{\set{v}})$. 
It follows that 
$\expct{\est_v  \mid v\text{ selected}} = |\matroidgroundset| (\Delta F(y))_v$,
 and 
$\expct{\est_v} = (\Delta F(y))_v$.
\end{proof}

Of course, one may take more samples to reduce the variance in the
estimates for $\Delta F$, e.g., several per coordinate, 
though Lemma~\ref{lem:magic} indicates that
this will not improve the worst-case regret.

\subsubsection{Using Online Continuous Greedy for the Offline Problem}
\andreas{Not clear if this discussion is needed, unless the online algorithm is more efficient than the offline algorithm.}
The \OnlineContinuousGreedy algorithm guarantees no-regret against
arbitrary sequences of objectives $\set{f_t}_{t \ge 0}$.  Our analysis
also suggests it can be used to solve the original offline problem, 
$\max_{S \in \indset} f(S)$, faster than \vondrak's original (offline)
\ContinuousGreedy algorithm.
As described in~\citep{calinescu07}, \ContinuousGreedy has a running
time of order $\tilde{O}(n^8)$ plus the time required to evaluate $f$ on $\tilde{O}(n^7)$
arguments, where the $\tilde{O}$ notation suppresses logarithmic
factors.  
\citet{calinescu07} describe this high complexity as being
``mostly due to the number of random samples necessary to achieve high
probability bounds'' and suggest that this ``can be substantially
reduced by a more careful implementation and analysis.''

To use \OnlineContinuousGreedy to solve the offline problem, 
simply run it for some number $T$ of rounds with objectives 
$f_t \equiv f$ for all rounds $t$, generating solutions
$\set{y^t}_{t=1}^{T}$ to the fractional relaxation of the problem,
while omitting the step in which the algorithm generates 
$S_t$ via the pipage rounding of $y^t$.
Then select a round $t \in [T]$ uniformly at random, and 
pipage round $y^t$
to obtain a set $S$.
As $T$ increases the bound on the expected quality of the solution will increase.
Fix a desired relative error bound
$\err$, so that we will compute a set $S'$ such that $\expct{f(S')} \ge (1 - 1/e
- \err)\max_{S \in \indset} f(S)$.
Setting $\delta = \err/2d$ and $T = \frac{4d^2 n g}{\err^2 \OPT^2}$
suffices, where 
$d = \max \set{|S| : S \in \indset}$ is the rank of the input matriod
$\matroid = (\matroidgroundset, \indset)$, $n = |\matroidgroundset|$
is the size of the groundset, $\OPT = \max_{S \in \indset} f(S)$ is
the offline optimum value, and $g = \max_{v \in \matroidgroundset}
f(\set{v})$ is an upper bound on $(\Delta F(y))_v$ for all $y \in \polytope{\matroid}$.
The running time is proportional to $nT/\delta = O\paren{\frac{d^3 n^2
  g}{\err^3 \OPT^2}}$ plus the time for the final pipage rounding
operation and the time required to evaluate $f$ on $T / \delta = O\paren{\frac{d^3 n
  g}{\err^3 \OPT^2}}$
arguments.

\section{Evaluation} \label {sec:evaluation}
We evaluate \OnlineGreedy experimentally on two applications: Learning to rank blogs that are effective in detecting cascades of information, and allocating advertisements to maximize revenue.

\vspace{\presubsec}
\subsection{Online learning of diverse blog rankings} %
\vspace{\postsubsec}
We consider the problem of ranking a set of blogs and news sources on the web.
Our approach is based on the following idea: A blogger writes a posting, and, after some time, other postings link to it, forming cascades of information propagating through the network of blogs.

More formally, an information cascade is a directed acyclic graph of vertices
(each vertex corresponds to a posting at some blog), where edges are
annotated by the time difference between the postings.
Based on this
notion of an information cascade, we would like to select blogs
that detect big cascades (containing many nodes) as early as
possible (i.e., we want to learn about an important event before
most other readers). In
\cite{leskovec07}
 it is shown how one
can formalize this notion of utility using a monotone submodular function
that measures the informativeness of a subset of blogs. Optimizing the submodular function yields a small set of blogs that ``covers'' most cascades.  This utility function prefers \emph{diverse} sets of blogs, minimizing the overlap of the detected cascades, and therefore minimizing redundancy.

The work by \cite{leskovec07} leaves two major shortcomings: Firstly, they select a \emph{set} of blogs rather than a \emph{ranking}, which is of practical importance for the presentation on a web service. Secondly, they do not address the problem of \emph{sequential} prediction, where the set of blogs must be updated dynamically over time. In this paper, we address these shortcomings.

\vspace{\prepara}
\paragraph{Results on offline blog ranking.}
\newcommand {\Blogs} {\mathcal{B}}
In order to model the blog ranking problem, we adopt the assumption that different users have different attention spans: Each user will only consider blogs appearing in a particular subset of positions. In our experiments, we assume that the probability that a user is willing to look at position $\k$ is proportional to $\gamma^\k$, for some discount factor $0<\gamma<1$.
More formally, let $g$ be the monotone submodular function measuring the informativeness of any set of blogs, defined as in \cite{leskovec07}.
Let $\Partition_\k = \Blogs \times \set {\k}$, where $\Blogs$ is the set of blogs.  Given an assignment $S \in \Feasible$, let $S^{[\k]} = S \cap \set { \Partition_1 \cup \Partition_2 \cup \ldots \cup \Partition_\k }$ be the assignment of blogs to positions 1 through $\k$.
We define the \emph{discounted} value of the assignment $S$ as
$
f(S) = \sum_{\k=1}^\K \gamma^\k \paren{ g(S^{[\k]})-g(S^{[\k-1]}) }.
$
It can be seen that $f: 2^\groundset \to \NonNegativeReals$ is monotone submodular.

For our experiments, we use the data set of \cite{leskovec07}, consisting of 45,192 blogs, 16,551 cascades, and 2 million postings collected during 12 months of 2006. We use the \emph{population affected} objective of \cite{leskovec07}, and use a discount factor of $\gamma = 0.8$.
Based on this data, we run our \OfflineGreedy algorithm with varying numbers of colors $\NumColors$ on the blog data set. \figref{fig:blogs_offline} presents the results of this experiment. For each value of $\NumColors$, we generate 200 rankings, and report both the average performance and the maximum performance over the 200 trials. Increasing $\NumColors$ leads to an improved performance over the locally greedy algorithm ($\NumColors=1$).

\vspace{\prepara}
\paragraph{Results on online learning of blog rankings.}

We now consider the online problem where on each round $t$ we want to output a ranking $S_t$. After we select the ranking, a new set of cascades occurs, modeled using a separate submodular function $f_t$, and we obtain a reward of $f_t(S_t)$. In our experiments, we choose one assignment per day, and define $f_t$ as the utility associated with the cascades occurring on that day. Note that $f_t$ allows us to evaluate the performance of any possible ranking $S_t$, hence we can apply \OnlineGreedy in the \emph{full-information feedback} model.

We compare the performance of our online algorithm using $\NumColors =
1$ and $\NumColors = 4$. \figref{fig:blogs_discount} presents the
average cumulative reward gained over time by both algorithms. We
normalize the average reward by the utility achieved by
the \OfflineGreedy algorithm (with $\NumColors = 1$) applied to the
entire data set.  \figref{fig:blogs_discount} shows that the
performance of both algorithms rapidly (within the first
47 rounds) converges to the performance of the offline
algorithm. The \OnlineGreedy algorithm with $\NumColors = 4$ levels
out at an approximately 4\% higher reward than the
algorithm with $\NumColors=1$.

\begin{figure}[t]
\centering
\subfigure[Blogs: Offline results]{
\includegraphics[width=.31\textwidth]{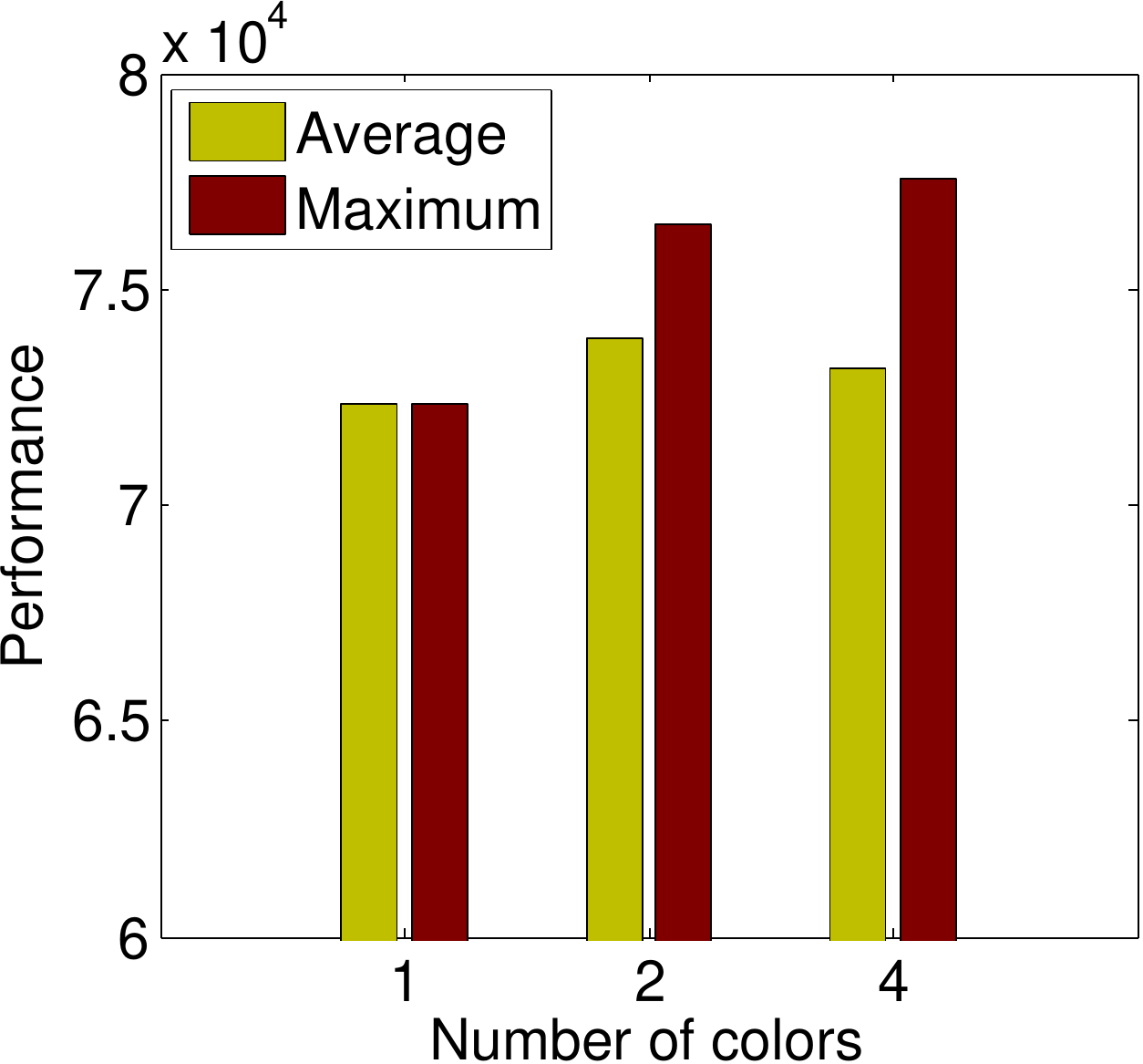}
 \label{fig:blogs_offline}
 }
 \subfigure[Blogs: Online results]{
\includegraphics[width=.31\textwidth]{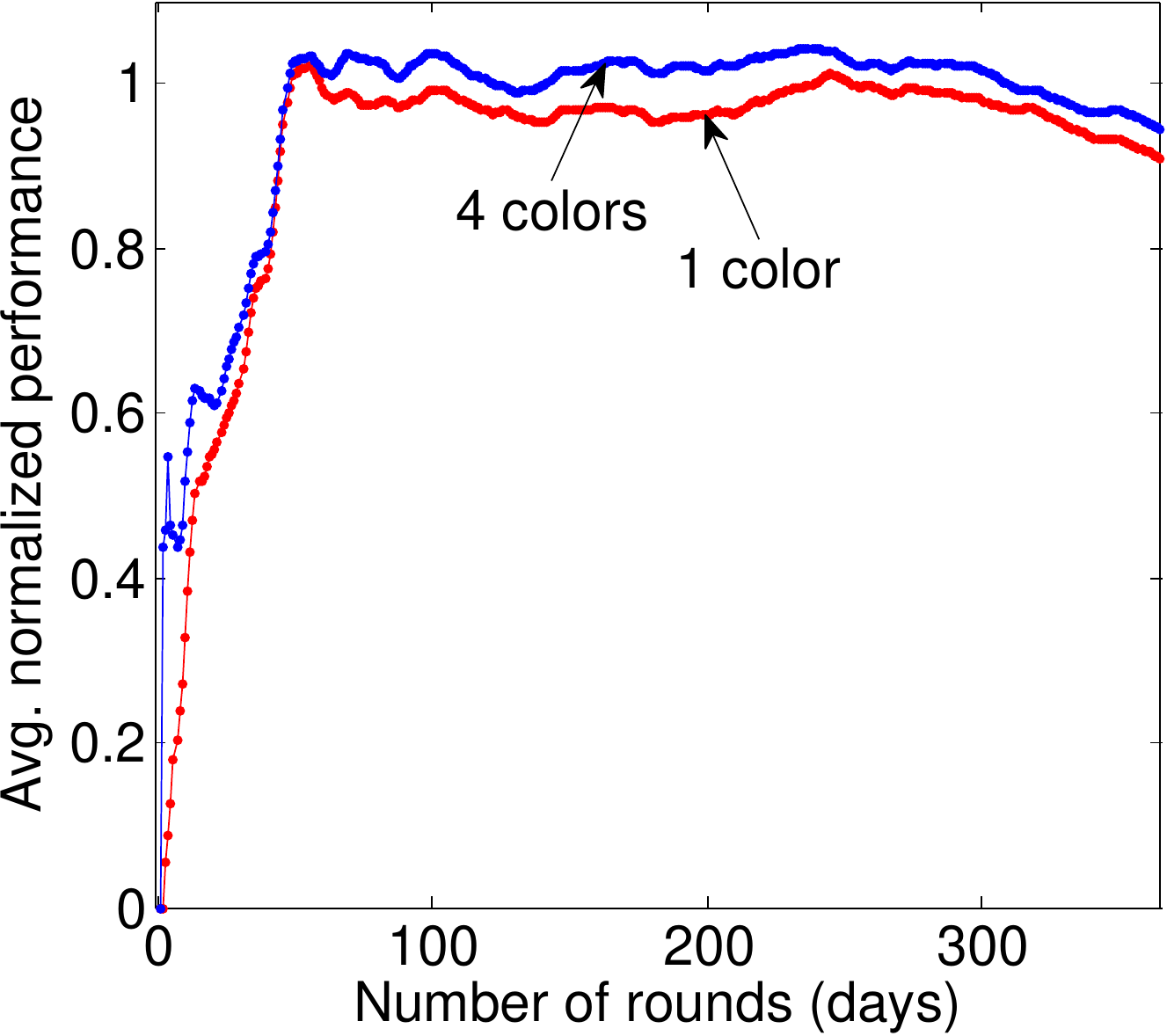}
 \label{fig:blogs_discount}
 }
 \subfigure[Ad display: Online results]{
\includegraphics[width=.30\textwidth]{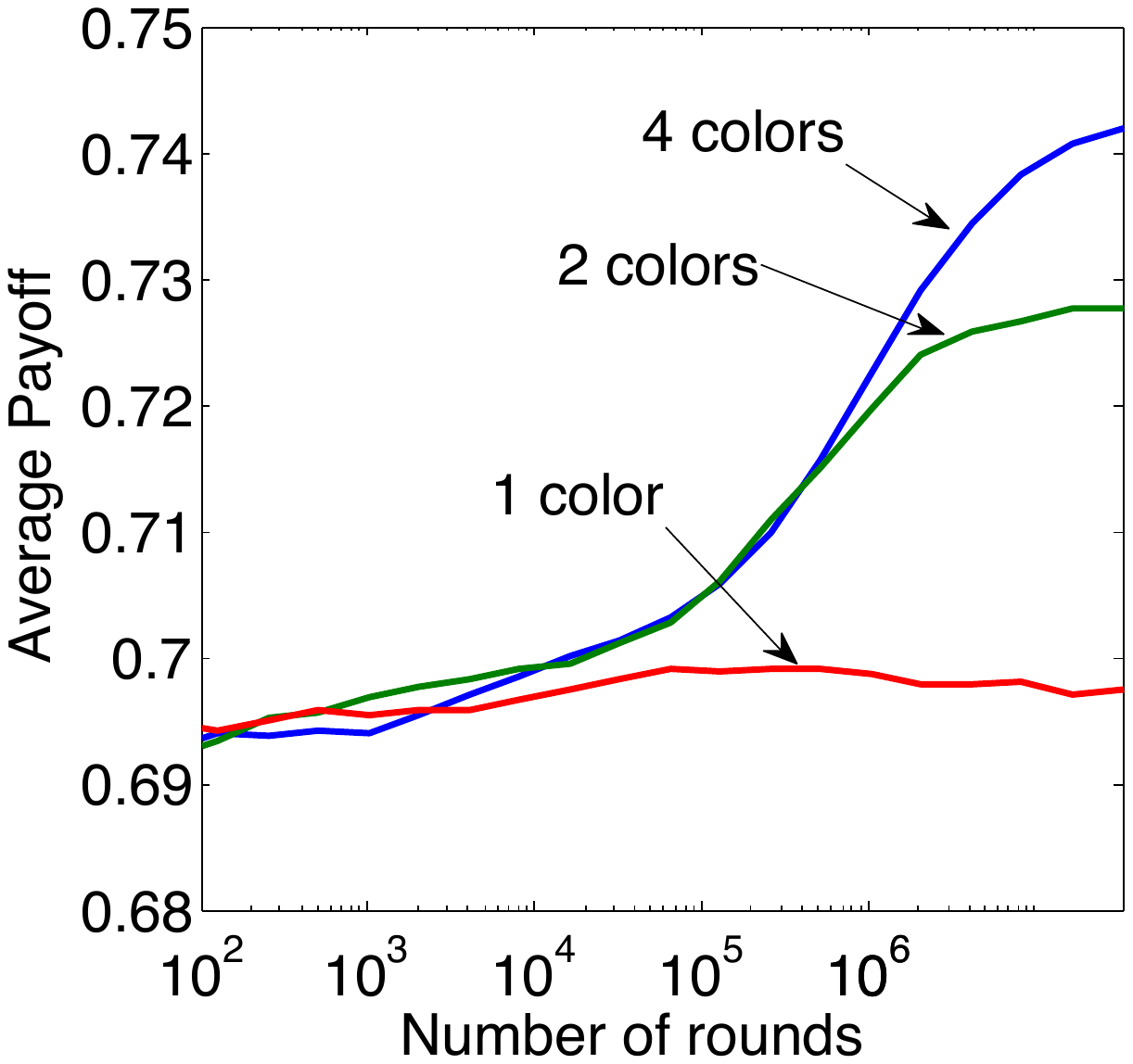}
\label{fig:online_ads}
 }
\caption{\small (a,b) Results for discounted blog ranking ($\gamma = 0.8$), in offline (a) and online (b) setting. (c) Performance of \OnlineGreedy with $\J = 1$,
$2$, and $4$ colors for the sponsored search ad selection problem (each round is a query).  Note that $\J = 1$ corresponds to the online
algorithm of~\cite{radlinski08} and \cite{streeter08}.\vspace{-4mm}}
\end{figure}
\vspace{\presubsec}
\subsection{Online ad display} \label{ssec:online_ads}
\vspace{\postsubsec}
\newcommand {\Ads} {\mathcal{A}}
\newcommand {\Pclick}{\ensuremath{p_{\text{click}}}}
\newcommand {\Pabandon}{\ensuremath{p_{\text{abandon}}}}

We evaluate \OnlineGreedy for the sponsored search ad selection
problem in a simple Markovian model incorporating the value of diverse results and
complex position-dependence among clicks.
In this model, each user $u$ is defined by
two sets of probabilities:
$\Pclick(a)$ for each ad $a \in \Ads$, and $\Pabandon(\k)$ for each
position $\k \in \bracket{\K}$.  When presented an assignment of ads
$\set{a_1, a_2, \ldots, a_\K}$, where $a_\k$ occupies position $\k$,
the user scans the positions in increasing order.
For each position $\k$, the user clicks on $a_\k$ with probability
$\Pclick(a_\k)$, leaving the results page forever.
Otherwise, with probability $(1 - \Pclick(a_\k)) \cdot \Pabandon(\k)$, the user loses interest and
abandons the results without clicking on anything.
Finally, with probability $(1 - \Pclick(a_\k)) \cdot (1 - \Pabandon(\k))$, the user proceeds to look
at position $\k+1$.
The reward function $f_t$ is the number of clicks, which is either zero or
one. We only receive information about $f_t(S_t)$ (i.e., \emph{bandit feedback}).

In our evaluation, there are $5$ positions, $20$ available ads,
and two (equally frequent) types of users: type $1$ users interested in
all positions ($\Pabandon \equiv 0$), and type $2$ users that quickly
lose interest ($\Pabandon \equiv 0.5$).  There are also two types of
ads, half of type $1$ and half of type $2$, and 
users are probabilistically more interested in ads of their own type
than those of the opposite type.  Specifically, 
for both types of users we set $\Pclick(a) = 0.5$ if $a$ has the
same type as the user, and $\Pclick(a) = 0.2$ otherwise.
In~\figref{fig:online_ads} we compare the performance of \OnlineGreedy with $\J = 4$ to the online
algorithm of~\cite {radlinski08, streeter08}, based on the average of $100$ experiments.  The latter algorithm is
equivalent to running \OnlineGreedy with $\J = 1$.  They perform
similarly in the first $10^4$ rounds; thereafter the former algorithm dominates.

It can be shown that with several different types of users with
distinct $\Pclick(\cdot)$ functions the
offline problem of finding an assignment within
$1 - \frac 1 e + \eps$ of optimal is $\NP$-hard.
This is in contrast to the
case in which $\Pclick$ and $\Pabandon$ are the \emph {same} for all
users; in this case the offline problem simply requires finding an
optimal policy for a Markov decision process, which can be done
efficiently using well-known algorithms.
A slightly different Markov model of user behavior which is efficiently solvable was considered in~\cite{aggarwal08}.
In that model, $\Pclick$ and $\Pabandon$ are the same for all users, and
$\Pabandon$ is a function of the ad in the slot currently being
scanned rather than its index.

\section{Related Work} \label {sec:related_work}
\andreas{Need some clarification on how this paper relates to our
  previous NIPS paper.} 

An earlier version of this work appeared as~\cite{streeter09}
(also~\cite{arxiv-version}).  The present article is significantly extended, including a new algorithm for online optimization over arbitrary matroids. For a general introduction to the
literature on submodular function maximization (including offline
versions of the problems we study), see~\cite{vondrak07} and the survey by \citet{krause14survey}.
For an overview of applications of submodularity to machine learning and artificial
intelligence, see~\cite{krause11submodularity}.

In the online setting, the most closely related work is that of~\cite{streeter08}.  Like us, they consider
sequences of monotone submodular reward functions that arrive online,
and develop an online algorithm that uses multi-armed bandit
algorithms as subroutines.  The key difference from our work is that,
as in \cite {radlinski08}, they are concerned with selecting a \emph
{set} of $\NumPartitions$ items rather than the more general problem
of selecting an \emph{assignment} of items to positions addressed in
this paper.  \citet{Kakade07} considered the general
problem of using $\alpha$-approximation algorithms to construct 
no $\alpha$-regret online algorithms, and essentially proved it could 
be done for the class of linear optimization problems in which the cost function
has the form $c(S, w)$ for a solution $S$ and weight vector $w$, and 
$c(S, w)$ is linear in $w$.   However, their result is orthogonal to
ours, because our objective function is submodular and not
linear\footnote{
\ifthenelse{\boolean{istechrpt}}{
Of course, it is possible to linearize a submodular function by
using a separate dimension for every possible function argument, 
but this results in an exponential number of dimensions, which leads
to exponentially worse convergence time and regret bounds for the
algorithms in~\cite{Kakade07} relative to \OnlineGreedy.
}
{One may linearize a submodular function by using a separate dimension for every possible function argument, 
but this leads to exponentially worse convergence time and regret bounds for the
algorithms in~\cite{Kakade07} relative to \OnlineGreedy.
}
}. %

Since the earlier version of this paper appeared, 
subsequent research has produced algorithms that incorporate \emph{context} into
their decisions:  \cite{dey2013} developed an
algorithm for contextual optimization of sequences of
actions, and used it to optimize control libraries used for various
robotic planning tasks.  \cite{ross2013learning} developed an improved
variant, and applied it to tasks such as news recommendation and
document summarization.

\daniel{Other, more recent papers to consider mentioning here: ``Contextual
  sequence prediction with application to control library
  optimization'', ``Predicting contextual sequences via submodular
  function maximization'', ``Knapsack Constrained Contextual
  Submodular List Prediction with Application to Multi-document
  Summarization'', 
``Diversity maximization under matroid
  constraints'', 
``Combinatorial Multi-Armed Bandit: General
  Framework, Results and Applications'', 
``Learning policies for contextual submodular prediction''}

\vspace{\presec}
\section{Conclusions}
\vspace{\postsec}
\label{sec:conclusions}
In this paper, we showed that important problems, such as ad display in
sponsored search and computing diverse rankings of information sources
on the web, require optimizing assignments under submodular utility
functions. We developed an efficient algorithm, \OfflineGreedy, which
obtains the optimal approximation ratio of $(1-1/e)$ for
this NP-hard optimization problem.  We also developed an online
algorithm, \OnlineGreedy, that asymptotically achieves no
$(1-1/e)$-regret for the problem of repeatedly selecting informative
assignments, under the full-information and bandit-feedback
settings.  We demonstrated that our algorithm outperforms 
previous work on two real world problems, namely online ranking of
informative blogs and ad allocation.  Finally, we developed
\OnlineContinuousGreedy, an online algorithm that can handle more
general matroid constraints, while still guaranteeing no $(1-1/e)$-regret.

\small{
\paragraph{Acknowledgments.}
This work was supported in part by Microsoft Corporation through a
gift as well as through the Center for Computational Thinking at Carnegie
Mellon, by NSF ITR grant CCR-0122581 (The Aladdin Center), NSF grant
IIS-0953413, and by ONR grant N00014-09-1-1044.
}

\bibliography{jmlr}

\ifthenelse{\boolean{istechrpt}}{
\section*{Appendix A: Proofs}

Lemma \ref {lem:locally_greedy} is a corollary of the following more general lemma.  The difference between the two lemmas is that, unlike Lemma \ref {lem:locally_greedy}, Lemma \ref {lem:locally_greedy_error} allows for the possibility that each $\argmax$ in the locally greedy algorithm is evaluated with additive error.  We will need this result in analyzing \OnlineGreedy later on.

\begin {lemma} \label {lem:locally_greedy_error}
Let $f : \Feasible \to \NonNegativeReals$ be a function of the
form $f(\assignment) = f_0(\assignment) + \sum_{\k=1}^\K f_\k(\assignment \cap P_\k)$, where $f_0 :
2^\groundset \to \NonNegativeReals$ is monotone submodular, and
$f_\k : P_\k \to \NonNegativeReals$ is arbitrary for $\k \ge
1$.  Let $L = \set {\ell_1, \ell_2, \ldots, \ell_\K}$, where $\ell_\k
\in P_\k$ (for $1 \le \k \le \K$).   Suppose that for any $\k$,
\begin {equation} \label {eq:near_locally_greedy}
  f(\set {\ell_1, \ell_2, \ldots, \ell_\k})
  \ge \max_{x \in P_\k} \set { f(\set {\ell_1, \ell_2, \ldots,
  \ell_{\k-1}} + x) }  - \eps_\k \mbox { .}
\end {equation}
Then
\[
    f(L) + f_0(L) \ge \max_{\assignment \in \Feasible} \set { f(\assignment) } - \sum_{\k=1}^\K \eps_\k \mbox { .}
\]
\end {lemma}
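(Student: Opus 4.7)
The plan is to telescope $f(L) - f(\emptyset)$ along the positions, plug in the near-greedy inequality at each step with the optimum element as the comparator, and then use submodularity of $f_0$ to relate the local marginals to a global marginal.  Fix an optimal feasible assignment $\OPT = \set{o_1,\ldots,o_\K}$ with $o_\k \in P_\k$; for positions $\k$ where $\OPT$ is empty, augment $P_\k$ with a dummy element that affects neither $f_0$ nor $f_\k$, so that WLOG $\OPT$ hits every position.  Write $L_\k = \set{\ell_1,\ldots,\ell_\k}$ with $L_0 = \emptyset$.

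The first step is a per-position bound.  Applying hypothesis~\eqref{eq:near_locally_greedy} with $x = o_\k$ gives
\[
  f(L_\k) - f(L_{\k-1}) \;\ge\; f(L_{\k-1} + o_\k) - f(L_{\k-1}) - \eps_\k .
\]
Because $o_\k \in P_\k$ and $L_{\k-1} \cap P_\k = \emptyset$, every term $f_j$ with $j \neq \k$ contributes identically to $f(L_{\k-1} + o_\k)$ and $f(L_{\k-1})$, so the $f$-marginal decomposes cleanly as
\[
  f(L_{\k-1} + o_\k) - f(L_{\k-1}) \;=\; \bigl[f_0(L_{\k-1} + o_\k) - f_0(L_{\k-1})\bigr] + \bigl[f_\k(\set{o_\k}) - f_\k(\emptyset)\bigr].
\]

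Next I telescope and sum.  Summing the displays above over $\k$, the left side collapses to $f(L) - f(\emptyset)$.  The sum of $f_\k(\set{o_\k}) - f_\k(\emptyset)$ over $\k$ equals $f(\OPT) - f_0(\OPT) - \sum_\k f_\k(\emptyset)$ by the defining decomposition of $f$.  For the $f_0$-marginals, I use submodularity of $f_0$ together with $L_{\k-1} \subseteq L$ to get
\[
  f_0(L_{\k-1} + o_\k) - f_0(L_{\k-1}) \;\ge\; f_0(L + o_\k) - f_0(L),
\]
and then subadditivity of marginals for monotone submodular $f_0$ (the standard ``union bound'' $f_0(L \cup \OPT) - f_0(L) \le \sum_\k [f_0(L+o_\k) - f_0(L)]$) combined with monotonicity $f_0(L \cup \OPT) \ge f_0(\OPT)$ yields
\[
  \sum_\k \bigl[f_0(L_{\k-1} + o_\k) - f_0(L_{\k-1})\bigr] \;\ge\; f_0(\OPT) - f_0(L).
\]

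Assembling the pieces and cancelling the $\sum_\k f_\k(\emptyset)$ and $f_0(\OPT)$ terms that appear on both sides, and using $f(\emptyset) \ge f_0(\emptyset) \ge 0$, gives
\[
  f(L) + f_0(L) \;\ge\; f(\OPT) - \sum_{\k=1}^\K \eps_\k,
\]
as required.  The main obstacle is purely bookkeeping: keeping the arbitrary per-position terms $f_\k(\emptyset)$ from leaking into the final bound, and correctly applying submodularity of $f_0$ in the right direction (the first application, with $L_{\k-1} \subseteq L$, goes ``$\ge$'' in the direction we want only because both sides are marginals of adding the same element $o_\k$).  Everything else is the standard Nemhauser-type telescoping argument, with the twist that we pay nothing for the $f_\k$ parts because the greedy step at position $\k$ sees the exact same $f_\k$-marginals as any swap within $P_\k$.
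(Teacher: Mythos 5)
Your proposal is correct and is essentially the paper's own argument: both proofs compare the $\k$th greedy step against $x = o_\k$ from a fixed optimal assignment, split the marginal into its $f_0$ and $f_\k$ parts, telescope, and use submodularity plus monotonicity of $f_0$ to convert the sum of local $f_0$-marginals into $f_0(\OPT) - f_0(L)$ (the paper telescopes the $f_0$-marginals exactly along $L \cup \set{o_1,\ldots,o_\k}$, whereas you bound them via subadditivity of marginals at $L$ — an equivalent amount of submodularity). Your explicit tracking of the $f_\k(\emptyset)$ terms is if anything slightly more careful than the paper, which implicitly takes $f_\k(\emptyset) = 0$.
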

\begin {proof}
Let $\OPT = \argmax_{\assignment \in \Feasible} \set { f(\assignment) }$, and let $\OPT = \set
{o_1, o_2, \ldots, o_\K}$, where $o_\k \in P_\k$ (for $1 \le \k \le \K$).  Define
\[
    \Delta_\k(L) := f_\k(\set{o_\k}) + f_0(L \cup \set { o_1, o_2,
    \ldots, o_\k })
- f_0(L \cup \set { o_1, o_2, \ldots, o_{\k-1} }) \mbox { .}
\]
Let $L_\k = \set {\ell_1, \ell_2, \ldots, \ell_{\k-1}}$.
Using submodularity of $f_0$, we have
\begin {align*}
\Delta_\k(L) & \le  f_\k(\set{o_\k}) + f_0(L_\k + o_\k) - f_0(L_\k) \\
    & = f(L_\k + o_\k) - f(L_\k) \\
    & \le f(L_\k + \ell_\k) - f(L_\k) + \eps_\k  \mbox { .}
\end {align*}
Then, using monotonicity of $f_0$,
\begin {align*}
f(\OPT)
    & \le f_0(L \cup \OPT) + \sum_{\k=1}^\K f_\k(\OPT \cap P_\k) \\
    & = f_0(L) + \sum_{\k=1}^\K \Delta_\k(L) \\
    & \le f_0(L) + \sum_{\k=1}^\K f(L_\k + \ell_\k) - f(L_\k) + \eps_\k \\
    & = f_0(L) + f(L) - f(\emptyset) + \sum_{\k=1}^\K \eps_\k \mbox { .}
\end {align*}
Rearranging this inequality and using $f(\emptyset) \ge 0$ completes the proof.
\end {proof}

To analyze \OnlineGreedy, we will also need Theorem \ref {thm:offline_greedy_error}, which is a generalization of Theorem \ref {thm:offline_greedy}.

\begin {theorem} \label {thm:offline_greedy_error}
Suppose $f$ is monotone submodular.
Let $G = \set {g_{\k,\j}: \k \in \bracket{\K}, \j \in \bracket {\J} }$,
where $g_{\k,\j} \in \P_\k$ for all $\k$ and $\j$.
Suppose that for all $\k \in \bracket{\K}$ and $\j \in \bracket{\J}$,
\begin {equation} \label {eq:epsilon}
  F(G^-_{\k,\j} + g_{\k,\j}) \ge
  \max_{x \in \P_\k \times \set{\j}} \set { G^-_{\k,\j} + x  } - \eps_{\k,\j}
\end {equation}
where $G^-_{\k,\j} = \set { g_{\k', \j'}: \k' \in \bracket{\NumPartitions}, \j' < \j } \cup \set {
g_{\k',\j}: \k' < \k }$ (i.e., $G^-_{\k,\j}$ equals $G$ just
before $g_{\k,\j}$ is added).
Then
$  f(G) \ge \beta(\J, \K) \cdot \max_{\assignment \in \Feasible} \set {f(\assignment)} -
  \sum_{\k=1}^\K \sum_{\j=1}^\J \eps_{\k,\j}$,
where $\beta(\J, \K)$ is defined as $1 - (1 - \frac {1} {\J})^\J - {\K \choose 2} \J^{-1} $.
\end {theorem}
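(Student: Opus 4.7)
The plan is to follow the structure of the proof of Theorem~\ref{thm:offline_greedy} (which decomposes into the outer loop analysis of Lemma~\ref{lem:outer_loop} and the inner loop analysis of Lemma~\ref{lem:inner_loop}), but carefully propagate the additive errors $\eps_{\k,\j}$ through both analyses. The key observation is that Lemma~\ref{lem:outer_loop} already accommodates per-color additive error terms $E_{\c}$, so we only need to show that if each inner-loop greedy step has additive error $\eps_{\k,\j}$, then the accumulated error $E_{\j}$ for color $\j$ is bounded by the inherent inner-loop slack ${\K \choose 2} \J^{-2} f^*$ plus $\sum_{\k} \eps_{\k,\j}$.

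First, I would invoke Lemma~\ref{lem:outer_loop} verbatim: choose $E_{\c}$ such that $F(G^-_{\c} \cup G_{\c}) \ge \max_{\Row \in \Rows_{\c}} F(G^-_{\c} + \Row) - E_{\c}$. The lemma then guarantees $F(G) \ge \beta(\J) \cdot f^* - \sum_{\j} E_{\j}$, where $f^* := \max_{\assignment \in \Feasible} f(\assignment)$ and $\beta(\J) = 1 - (1-1/\J)^\J$. It remains to control $E_{\j}$.

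Next, I would revisit the proof of Lemma~\ref{lem:inner_loop}, which shows that building $G_{\c}$ greedily one partition at a time corresponds exactly to running the locally greedy algorithm of Lemma~\ref{lem:locally_greedy} on an objective of the form $f_0 + \sum_{\k} f_\k$, where $f_\k$ captures the expected marginal contribution conditioned on $N=1$ (a single partition receiving color $\c$) and $f_0$ captures the contribution from $N \ge 2$. In the exact-argmax setting, Lemma~\ref{lem:locally_greedy} bounds the gap by $\Pr{N\ge 2}\E{\Delta_{\cvec}(G_{\c}) \mid N \ge 2} \le {\K \choose 2}\J^{-2} f^*$. Now, instead of invoking Lemma~\ref{lem:locally_greedy}, I would invoke its error-tolerant generalization, Lemma~\ref{lem:locally_greedy_error}, with error parameters $\eps_{\k,\j}$ (since the hypothesis~\eqref{eq:epsilon} of our theorem is precisely the near-locally-greedy condition~\eqref{eq:near_locally_greedy} for this decomposed objective). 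This yields
\[
E_{\j} \ \le \ {\K \choose 2}\J^{-2} f^* \ + \ \sum_{\k=1}^{\K} \eps_{\k,\j}.
\]

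Summing over $\j \in \bracket{\J}$, we obtain $\sum_{\j} E_{\j} \le {\K \choose 2}\J^{-1} f^* + \sum_{\k,\j} \eps_{\k,\j}$, and plugging back into Lemma~\ref{lem:outer_loop} gives
\[
F(G) \ \ge \ \beta(\J) f^* \ - \ {\K \choose 2}\J^{-1} f^* \ - \ \sum_{\k,\j} \eps_{\k,\j} \ = \ \beta(\J,\K) f^* \ - \ \sum_{\k,\j} \eps_{\k,\j},
\]
which is the desired inequality (after noting $f(G) \ge F(G)$ holds in the sense used in the statement, or equivalently the statement is about $F(G)$ as in Theorem~\ref{thm:offline_greedy}).

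The main obstacle, and the step requiring the most care, is verifying that Lemma~\ref{lem:locally_greedy_error} applies cleanly inside the analysis of the inner loop. Specifically, one must check that the hypothesis $F(G^-_{\k,\j} + g_{\k,\j}) \ge \max_{x} F(G^-_{\k,\j} + x) - \eps_{\k,\j}$ of our theorem translates, after subtracting the constant $F(G^-_{\c})$ and rewriting $F_{\c}(\cdot)$ as $f_0 + \sum_{\k} f_{\k}$ per the inner-loop decomposition, into the near-locally-greedy hypothesis of Lemma~\ref{lem:locally_greedy_error} with the \emph{same} $\eps_{\k,\j}$. This is true because $F$ is linear with respect to shifting by $F(G^-_{\c})$ and because the decomposition into $f_0$ and $\{f_{\k}\}$ preserves additive slack on a partition-by-partition basis; nothing in the error propagation is lost along the way.
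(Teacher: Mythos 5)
Your proposal is correct and follows exactly the route the paper takes: its proof of Theorem~\ref{thm:offline_greedy_error} is stated as being identical to that of Theorem~\ref{thm:offline_greedy} (i.e., combining Lemma~\ref{lem:outer_loop} with the inner-loop analysis of Lemma~\ref{lem:inner_loop}) but with Lemma~\ref{lem:locally_greedy_error} substituted for Lemma~\ref{lem:locally_greedy}, which is precisely your error-propagation argument yielding $E_{\j} \le {\K \choose 2}\J^{-2} f^* + \sum_{\k}\eps_{\k,\j}$. You have in fact supplied more detail than the paper does, and your closing remark that the conclusion should be read as a bound on $F(G)$ (as in Theorem~\ref{thm:offline_greedy}) correctly identifies a typo in the statement.
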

\begin {proof}
The proof is identical to the proof of Theorem \ref {thm:offline_greedy} in the main text, using Lemma \ref {lem:locally_greedy_error} in place of Lemma \ref {lem:locally_greedy}.
\end {proof}

\noindent Finally, we prove Theorem \ref {thm:online_greedy}, which we restate here for convenience.\\

\noindent \textbf{Theorem \ref{thm:online_greedy}} 
\emph{
Let $\Regret_{\k,\j}$ be the regret of $\Alg_{\k,\j}$, and let $\beta(\K,
\J) = 1 - \paren { 1 - \frac 1 \J }^\J - {\K \choose 2} \J^{-1}$.
Then
\[
    \E { \sum_{t=1}^T f_t(G_t) } \ge \beta(\K, \J) \cdot \max_{\assignment \in \Feasible} \set { \sum_{t=1}^T f_t(\assignment) } - \E { \sum_{\k=1}^\K \sum_{\j=1}^\J \Regret_{\k,\j} } \mbox { .}
\]
}
\begin {proof}
The idea of the proof is to view \OnlineGreedy as a version of
\OfflineGreedy that, instead of greedily selecting single (element,color) pairs
$g_{\k,\j} \in \P_\k \times
\set {\j}$, greedily selects (element vector, color) pairs
$\vec g_{\k,\j} \in \P_\k^T \times \set {\j}$, where $T$ is the number of rounds.

First note that for any $\k \in \bracket{\K}$, $\j \in \bracket {\J}$,
and $x \in \P_\k \times \set{\j}$,
by definition of $\Regret_{\k,\j}$ we have
\[
  \sum_{t=1}^T \bar F_t\paren{G^t_{\k,\j} + g^t_{\k,\j}} \ge
  \paren{\sum_{t=1}^T \bar F_t\paren{G^t_{\k,\j} + x}} -\Regret_{\k,\j} \mbox { .}
\]
Taking the expectation of both sides over $\Color$, and choosing $x$
to maximize the right hand side, we get
\begin {equation} \label {eq:expected_regret}
  \sum_{t=1}^T F_t\paren{G^t_{\k,\j} + g^t_{\k,\j}} \ge
  \max_{x \in \P_\k \times \set{\j}} \set { \sum_{t=1}^T F_t\paren{G^t_{\k,\j} + x} } - \eps_{\k,\j}
\end {equation}
where we define $F_t(\assignment) = \Esub{\cvec}{f_t(\sample_{\cvec}(\assignment))}$ and
$\eps_{\k,\j} = \E {\Regret_{\k,\j}}$.

We now define some additional notation.
For any set $\vec S$ of vectors in $\groundset^T$, define
\[
  f(\vec S) = \sum_{t=1}^T f_t\paren{\set{\vec a_t: \vec a \in \vec
  S}} \mbox { .}
\]
Next, for any set $\vec S$ of (element vector, color) pairs in
$\bigcup_{\k=1}^\K (\P_\k^T
\times \set {\j})$, define $\sample_{\cvec}(\vec S) = \bigcup_{\k=1}^\K
\set { \vec x \in \P_\k^T: (\vec x, \Color_\k) \in \vec S  }$.
Define $F(\vec S) = \Esub{\cvec}{f(\sample_{\cvec}(\vec S))}$.
By linearity of expectation,
\begin {equation} \label {eq:linexp}
  F(\vec S) = \sum_{t=1}^T F_t\paren{\set { (\vec x_t, \j): (\vec x, \j)
  \in \vec S  }} \mbox { .}
\end {equation}
Let $\vec g_{\k,\j} = (\vec x, \j)$, where $\vec x$ is such that $(\vec
x_t, \j) = g^t_{\k,\j}$ for all $t \in \bracket{T}$.
Analogously to $G^{t-}_{\k,\j}$, define
$\vec G^-_{\k,\j} = \set {\vec g_{\k',
\j'}: \k' \in \bracket{\K}, \j' < \j} \cup \set {\vec g_{\k',\j}: \k' < \k }$.
By \eqref {eq:linexp}, for any $(\vec x, \j) \in \groundset^T \times \bracket{\J}$ we have
$F(\vec G^-_{\k,\j} + (\vec x, \j)) = \sum_{t=1}^T F_t(G^{t-}_{\k,\j} +
(\vec x_t, \j))$.
Combining this with \eqref {eq:expected_regret}, we get
\begin {equation} \label {eq:goal}
  F\paren{\vec G^-_{\k,\j} + \vec g_{\k,\j}} \ge \max_{a: a \in \P_\k} \set {
  F\paren{\vec G^-_{\k,\j} + (a^T, \j)} } - \eps_{\k,\j}
\end {equation}
where $a^T$ is the unique element of $\set{a}^T$.
Having proved \eqref {eq:goal}, we can now use Theorem \ref
{thm:offline_greedy_error} to complete the proof.
Let $\vec P_\k := \set {a^T: a \in P_\k}$ for each $\k \in \bracket{\K}$,
and define a new partition matroid over ground set $\set{a^T : a \in \groundset}$
with feasible solutions
$\vec \Feasible := \set{ \vec \assignment : \forall
\k \in \bracket{\K}, |\vec \assignment \cap \vec \Partition_\k| \le 1}$.
Let $\vec G = \set {\vec g_{\k,\j}: \k \in \bracket{\K}, \j \in \bracket{\J}}$.
As argued in the proof of Lemma \ref {lem:outer_loop},
$F_t$ is monotone submodular.  Using this fact together with \eqref
{eq:linexp}, it is straightforward to show that $F$ itself is monotone
submodular.  Thus by Theorem \ref {thm:offline_greedy_error},
\[
  F(\vec G) \ge \beta(\J,\K) \cdot \max_{\vec \assignment \in \vec \Feasible} \set {
  f(\vec \assignment) } - \sum_{\k=1}^\K \sum_{\j=1}^\J \eps_{\k,\j} \mbox { .}
\]
To complete the proof, it suffices to show that
$F(\vec G) = \E {\sum_{t=1}^T f_t(G_t) }$, and that
$\max_{\vec \assignment \in \vec \Feasible} \set {f(\vec \assignment)} \ge \max_{\assignment \in \Feasible}
\set { \sum_{t=1}^T f_t(\assignment) }$.  Both facts follow easily from
the definitions.
\end {proof}

} {}

\end{document}